\DeclareMathOperator*{\argmax}{arg\,max}
\DeclareMathOperator*{\argmin}{arg\,min}
\newcommand{\independent}{\perp\!\!\!\perp}
\newcommand{\notindependent}{\not\!\perp\!\!\!\perp}
\newcommand{\sep}[3]{#1 \independent #2 \vert #3}
\newcommand{\nsep}[3]{#1 \notindependent #2 \vert #3}
\newcommand{\Pa}[2]{\textit{Pa}_{#2}(#1)}
\newcommand{\Anc}[2]{\textit{Anc}_{#2}(#1)}
\newcommand{\Ub}[0]{\mathbf{U}}
\newcommand{\ub}[0]{\mathbf{u}}
\newcommand{\Eb}[0]{\mathbf{E}}
\newcommand{\Xb}[0]{\mathbf{X}}
\newcommand{\Nb}[0]{\mathbf{N}}
\newcommand{\Sb}[0]{\mathbf{S}}
\newcommand{\G}[0]{\mathcal{G}}
\newcommand{\M}[0]{\mathcal{M}}
\newcommand{\res}[2]{res_{#2}(#1)}
\newcommand{\proj}[2]{proj_{#2}(#1)}
\newcommand{\inner}[2]{\langle #1, #2 \rangle}
\newcolumntype{M}[1]{>{\centering\arraybackslash}m{#1}}
\newcolumntype{N}{@{}m{0pt}@{}}
\DeclareMathAlphabet\mathbfcal{OMS}{cmsy}{b}{n}
\newcommand{\Gpi}{\G^{\pi}}
\newcommand{\ssq}{S^{-\frac{1}{2}}}
\newcommand{\MEC}[1]{[#1]}
\newcommand{\name}[0]{\textsc{QWO}\xspace}
\newcommand{\BX}[0]{\mathcal{B}(\Xb)}
\newcommand{\cov}[0]{\text{Cov}}
\newtheorem{assumption}{Assumption}
\newtheorem{theorem}{Theorem}[section]
\newtheorem{lemma}[theorem]{Lemma}
\newtheorem{definition}[theorem]{Definition}
\newtheorem{remark}{Remark}[section]
\title{\name: Speeding Up Permutation-Based Causal Discovery in LiGAMs}
\author{%
  Mohammad Shahverdikondori \\
  College of Management of Technology\\
  EPFL, Lausanne, Switzerland \\ 
  \texttt{mohammad.shahverdikondori@epfl.ch} \\
  \And
  Ehsan Mokhtarian \\
  School of Computer and Communication Sciences \\
  EPFL, Lausanne, Switzerland  \\
  \texttt{ehsan.mokhtarian@epfl.ch} \\
  \And
  Negar Kiyavash \\
  College of Management of Technology \\
  EPFL, Lausanne, Switzerland  \\
  \texttt{negar.kiyavash@epfl.ch} \\
}
\begin{document}

\maketitle

\begin{abstract}
    Causal discovery is essential for understanding relationships among variables of interest in many scientific domains. In this paper, we focus on permutation-based methods for learning causal graphs in Linear Gaussian Acyclic Models (LiGAMs), where the permutation encodes a causal ordering of the variables. Existing methods in this setting do not scale due to their high computational complexity. These methods are comprised of two main components: (i) constructing a specific DAG, $\Gpi$, for a given permutation $\pi$, which represents the best structure that can be learned from the available data while adhering to $\pi$, and (ii) searching over the space of permutations $\pi$s (i.e., causal orders) to minimize the number of edges in $\Gpi$. We introduce QW-Orthogonality (QWO), a novel approach that significantly enhances the efficiency of computing $\Gpi$ for a given permutation $\pi$. QWO has a speed-up of $O(n^2)$ ($n$ is the number of variables) compared to the state-of-the-art BIC-based method, making it highly scalable. We show that our method is theoretically sound and can be integrated into existing search strategies such as GRaSP and hill-climbing-based methods to improve their performance. The implementation is publicly available at \url{https://github.com/ban-epfl/QWO}.
\end{abstract}

\section{Introduction}
    Causal discovery is fundamental to understanding and modeling the relationships between variables in various scientific domains \cite{pearl2009causality, spirtes2000causation}. 
    A causal graph, typically represented by a directed acyclic graph (DAG), is a graphical model that represents how variables within a system influence one another \cite{pearl2009causality}.
    The problem of causal discovery refers to learning the causal graph from available data \cite{spirtes2000causation, margaritis1999bayesian, chickering2002optimal, fu2013learning, zheng2018dags, bernstein2020ordering, mokhtarian2021recursive, grasp, mokhtarian2022learning, mokhtarian2024recursive}, which has broad applications across numerous fields such as economics \cite{heckman2008econometric}, genetics \cite{schadt2003genetics}, and social sciences \cite{morgan2015counterfactuals}.

    Score-based methods are an important class of approaches for causal discovery \cite{scoreBased1, scoreBased2, scoreBased3, huang2018generalized, schwarz1978estimating, buntine1991theory}.
    They involve defining a score function over the space of DAGs and seeking the graph that maximizes this score.
    However, the computational complexity of exploring the space of all possible DAGs, which is in the order of $2^{\Omega(n^2)}$, where $n$ is the number of variables, poses a significant challenge \cite{robinson1973counting}.
    Ordering-based methods, introduced by \cite{teyssier2012ordering}, significantly reduce the search space of score-based methods to $2^{\mathcal{O}(n \log(n))}$ by considering only topological orderings of DAGs, rather than the DAGs themselves \cite{friedman2003being-perm, zhu2019causal-perm, mokhtarian2022novel, sanchez2022diffusion-perm}.
    Building on this premise, various permutation-based methods have been proposed that further refine the search for causal structures \cite{scanagatta2015learning-asobsSearch, grasp, solus2021consistency-gspSearch}.
    These methods have two main ingredients:
    \begin{enumerate}[leftmargin=*]
        \item \textbf{Constructing $\Gpi$:} A module that for any given permutation $\pi$ constructs a specific DAG, denoted by $\Gpi$.
        We will formally define $\Gpi$ in Definition \ref{def: Gpi}, but roughly speaking, this DAG represents the \emph{best} structure that can be inferred from the data while conforming to the given permutation $\pi$.
        \item \textbf{Search over $\pi$:} The work by \cite{sp} (which proposed the Sparsest Permutation algorithm) showed that the correct permutation minimizes the number of edges in $\Gpi$.
        Accordingly, permutation-based methods are equipped with a search strategy that, using the module mentioned above for computing $\Gpi$, searches through the space of permutations to minimize the number of edges in $\Gpi$.
    \end{enumerate}
    
    In recent years, various search methods have been developed to enhance the accuracy, robustness, and scalability of the algorithm for the second part, i.e., searching over the permutations \cite{grasp, tsamardinos2006max-maxminHC, mokhtarian2022novel, teyssier2012ordering-osSearch}.
    These methods typically involve traversing over the space of permutations by iteratively updating a permutation to reduce the number of edges in $\Gpi$, using the first module to construct $\Gpi$ multiple times during the algorithm's execution.
    To construct $\Gpi$ or update $\Gpi$ after a permutation update, most methods utilize a decomposable score function and identify the parent set that maximizes the score of each variable.
    Table \ref{tab:complexity} presents a time complexity comparison of different methods for computing and updating $\Gpi$.
    We will delve into a more detailed discussion of these methods and their computational complexity in Section \ref{sec: permutation-based}.

    \begin{table}[t!] 
        \centering
        \caption{Time complexity comparison of various methods for computing and updating $\Gpi$ in LiGAMs. Here, $n$ is the number of variables, $d$ is the length of the updated block of the permutation, and $k$ is the number of folds considered in $k$-fold cross-validation.}
        \label{tab:complexity} 
        \small
        \begin{tabular}{c|c|c|c|c}
            \toprule
            Method & \textbf{\name} & BIC & BDeu & CV General \\ 
            \hline
            Initial Complexity & $O(n^3)$ & $O(n^5)$ & $\Omega(n^3N\log(N))$ & $O(\frac{n^2N^3}{k^2})$ \\ 
            Update Complexity & $O(n^2d)$ & $O(n^4d)$ & $\Omega(n^2dN\log(N))$ & $O(\frac{ndN^3}{k^2})$ \\
            \bottomrule
        \end{tabular}
    \end{table}

    In this paper, we focus on linear Gaussian acyclic models (LiGAMs), an important class of continuous causal models.
    We propose a novel method called QW-Orthogonality (\name) for computing $\Gpi$ for a given permutation $\pi$, which significantly enhances the computational efficiency of this module.
    Specifically, as shown in Table \ref{tab:complexity}, \name's complexity is independent of the number of data points $N$.
    Moreover, it speeds up computing/updating $\Gpi$ by $O(n^2)$ compared to the state-of-the-art BIC-based method.
    Some key advantages of \name are as follows:
    \begin{enumerate}[leftmargin=*]
        \item \textbf{Soundness:} \name is theoretically sound for LiGAM models. That is, it is guaranteed to learn $\Gpi$ accurately for any given permutation $\pi$ when sufficient samples are available.
        \item \textbf{Scalability:} The proposed method is scalable to large graphs. Its time complexity is independent of the number of samples, and its dependence on the number of variables $n$ is $O(n^2)$ faster than the state-of-the-art BIC-based method.
        \item \textbf{Compatibility with existing search methods:} After updating a permutation, \name efficiently updates $\Gpi$ to improve its compatibility with established search methods. In our experiments, we combine \name with both GRaSP \cite{grasp} and a hill-climbing-based search technique \cite{tsamardinos2006max-maxminHC, selman2006hill-HC}, showing its superior performance in terms of time complexity.
    \end{enumerate}

\section{Notations}
    Throughout this paper, matrices are denoted by capital letters, and sets or vectors of random variables are denoted by bold capital letters.
    We use the terms ``variable" and ``vertex" interchangeably in graphs.
    $[n]$ denotes the set of natural numbers from 1 to $n$, and $\Pi([n])$ denotes the set of all $n!$ permutations over $[n]$.
    For a permutation $\pi \in [n]$, $P_{\pi}$ denotes the permutation matrix, such that for any $n \times n$ matrix $A$, the product $P_{\pi} A P_{\pi}^T$ permutes the rows and columns of $A$ according to the permutation $\pi$.
    The identity matrix is denoted by $I$, and its dimension is implied by the context.
    $\|A\|_0$ denotes the number of non-zero entries in matrix $A$.
    The set of $n \times n$ diagonal matrices is represented by $\mathcal{D}_n$.
    $\langle a,b \rangle$ denotes the inner product of vectors $a$ and $b$.
    For a set $\Xb = \{X_1, \dots, X_n\}$ and $\Sb \subseteq [n]$, we define $\Xb_{\Sb} = \{X_i \vert i \in \Sb\}$.
    $\text{diag}(A)$ denotes a diagonal matrix with the same diagonal elements as matrix $A$.

    In a directed graph (DG), edges are directed and may form cycles.
    A DG without cycles is called a directed acyclic graph (DAG).
    If there is a directed edge from $X$ to $Y$, then $X$ is a parent of $Y$.
    $X$ is an ancestor of $Y$ if there is a directed path from $X$ to $Y$.
    In a DAG $\G$, for any vertex $X$, $e(\G)$, $\Pa{X}{\G}$, and $\Anc{X}{\G}$ denote the number of edges, the parent set of $X$, and the ancestor set of $X$, respectively.
    Suppose $\pi$ is a permutation over the vertices of a DAG $\G = (\Xb, \Eb)$.
    $\pi$ is a topological order for $\G$, or equivalently $\G$ is compatible with $\pi$, if for any edge $X_{\pi(i)} \to X_{\pi(j)} \in \Eb$, $i<j$.

    \begin{definition}[LiGM, LiGAM, $G(B)$]
        Suppose $\Xb = [X_1, \dots, X_n]^T$ is a random vector, $B$ is an $n \times n$ matrix such that $I-B$ is invertible, and $\Sigma \in \mathcal{D}_n$.
        Pair $\M = (B, \Sigma)$ is called a linear Gaussian model (LiGM) that generates $\Xb$ if
        \begin{equation} \label{eq: lin SEM matrix form}
            \Xb = (I-B)^{-1} \Nb,
        \end{equation}
        where $\Nb \sim \mathcal{N}(0, \Sigma)$ is a Gaussian random vector with covariance matrix $\Sigma$.
        Equivalently, we have
        \begin{equation}
            \Xb = B \Xb + \Nb.
        \end{equation}
        The causal graph of $\M$, denoted by $G(B)$, is a DG with the adjacency matrix corresponding to the support of $B^T$.
        A LiGM is a linear Gaussian acyclic model (LiGAM) if its causal graph is a DAG.
    \end{definition}
    The covariance matrix of $\Xb$ in a LiGM $\M=(B, \Sigma)$ is given by
    \begin{equation}
        \cov(\Xb) = (I-B)^{-1} \Sigma (I-B)^{-T}.
    \end{equation}
    For distinct indices $1 \leq i,j \leq n$, and $\Sb \subseteq \Xb$, we use $\sep{X_i}{X_j}{X_{\Sb}}$ to denote that $X_i$ and $X_j$ are independent conditioned on $X_{\Sb}$.
    The notion of \emph{d-separation} defined over DAGs is a graphical criterion to encode conditional independence (CI) within a graph.
    We similarly use $\sep{X_i}{X_j}{X_{\Sb}}$ to denote that $X_i$ and $X_j$ are d-separated given $X_{\Sb}$ in a DAG.
    For a formal definition of d-separation, see \cite{pearl1988probabilistic}.

    \begin{definition}[$\MEC{\G}$]
        Two DAGs are Markov equivalent if they impose the same d-separations.
        For a DAG $\G$, the set of its Markov equivalent DAGs are represented by $\MEC{\G}$.
    \end{definition}

    In a LiGAM, the \emph{Markov property} states that if two variables are d-separated in the DAG, then they are conditionally independent in the corresponding probability distribution.
    This property holds for structural equation models (SEMs), including LiGAMs; see Theorem 1.2.5 in \cite{pearl2009causality} for more details.
    Conversely, the \emph{faithfulness} assumption posits that if two variables are conditionally independent in the distribution, then they are d-separated in the DAG.
    Together, the Markov property and faithfulness establish a one-to-one correspondence between the graphical d-separations and the conditional independencies in the distribution.

    While the faithfulness assumption provides a strong correspondence between the distribution and the graph, it can be restrictive in practical applications.
    Recognizing this limitation, weaker versions of faithfulness have been proposed.
    One such alternative is the \emph{sparsest Markov representation (SMR)} assumption introduced by \cite{sp}.
    Formally, for a DAG $\G$ and a distribution $P$ defined over the vertices of $\G$, $(\G, P)$ satisfies the SMR assumption if $(\G, P)$ satisfies the Markov property, and for every DAG $\G'$ such that $(\G', P)$ also satisfies the Markov property and $\G' \notin [\G]$, it holds that $\lvert \G' \rvert > \lvert \G \rvert$.
    Here, $\lvert \G \rvert$ denotes the number of edges in $\G$.

\section{Permutation-Based Causal Discovery in LiGAMs} \label{sec: permutation-based}
    In this section, we discuss permutation-based methods for causal discovery in LiGAMs. We begin by formalizing our assumptions, which will be referenced throughout the remainder of the paper.
    
    \begin{assumption} \label{assumption}
        Let $\M^* = (B^*, \Sigma^*)$ be a LiGAM that generates the random vector $\Xb = [X_1, X_2, \dots, X_n]^T$.
        Let $D$ be an $N \times n$ data matrix, where each row is an i.i.d. sample from $\Xb$.
        We assume that the pair $(\G^*, P^*)$ satisfies the (SMR) assumption, where $\G^* = G(B^*)$ is the causal graph of $\M^*$ and $P^*$ is the joint distribution of $\Xb$.
    \end{assumption}

    Under Assumption \ref{assumption}, our goal is to learn the causal graph $\G^*$ from the observational data $D$.
    However, using mere observational data—even when the number of samples $N$ approaches infinity—$\G^*$ can only be identified up to its Markov equivalence class (MEC) \cite{spirtes2000causation, pearl2009causality}.
    Therefore, the best we can aim for in the causal discovery of LiGAMs is to identify $\MEC{\G^*}$, a problem known to be computationally NP-hard \cite{chickering2004large-npHard}.

    As briefly discussed in the introduction, score-based methods aim to minimize a score function in the space of DAGs, whereas permutation-based methods restrict this space to topological orderings of DAGs.
    Thus, they seek a permutation $\pi$ such that $\pi$ is a topological order of a DAG in $\MEC{\G^*}$.

    \begin{definition}[$\Gpi$] \label{def: Gpi}
        Under Assumption \ref{assumption}, for any arbitrary permutation $\pi \in \Pi([n])$, we denote by $\Gpi = (\Xb, \Eb^{\pi})$ the unique DAG constructed from $P^*$ as follows:
        For distinct indices $1 \leq i,j\leq n$, there is an edge from $X_{\pi(i)}$ to $X_{\pi(j)}$ in $\Eb^{\pi}$ if and only if
        \begin{equation} \label{gpi} 
            i < j \quad \text{and} \quad \nsep{X_{\pi(i)}}{X_{\pi(j)}}{X_{\{\pi(1), \pi(2), \ldots, \pi(j-1)\}\backslash\{\pi(i)\}}}.
        \end{equation}
    \end{definition}
    \begin{remark}
        For any $\pi$, $\Gpi$ is compatible with $\pi$.
        Furthermore, if a DAG $\G \in \MEC{\G^*}$ is compatible with $\pi$, then $\Gpi = \G$.  
    \end{remark}

    \cite{sp} showed that a permutation $\pi$ minimizes the number of edges in DAG $\Gpi$ if and only if $\pi$ is the topological order of a DAG in $\MEC{\G^*}$.
    Therefore, permutation-based methods typically formulate causal discovery as follows.
    \begin{equation} \label{eq: perm-based methods}
        \argmin_{\pi \in \Pi([n])} |\Eb^{\pi}|
    \end{equation}
    Various algorithms have been proposed for solving \eqref{eq: perm-based methods} \cite{grasp, solus2021consistency-gspSearch, teyssier2012ordering-osSearch, friedman2003being-perm}.
    As discussed in the introduction, these methods include two components:
    \begin{enumerate}[leftmargin=*]
        \item \textbf{Constructing $\Gpi$:} A module that for a given permutation $\pi$ constructs $\Gpi$.
        \item \textbf{Search over $\pi$:} A search strategy over the space of permutations to solve \eqref{eq: perm-based methods}.
    \end{enumerate}

    Given that minimization in \eqref{eq: perm-based methods}, search strategies typically traverse the space of permutations and greedily update the permutation.
    To maintain computational efficiency, these methods use a module to update $\Gpi$ incrementally rather than recomputing it from scratch.
    Moreover, good search strategies are ideally consistent in permutation-based causal discovery.
    A consistent search method guarantees to solve \eqref{eq: perm-based methods} as long as it is equipped with a module that correctly computes $\Gpi$.
    In Appendix \ref{apx: search methods}, we review two search strategies: GRaSP \cite{grasp} and hill-climbing-based methods \cite{tsamardinos2006max-maxminHC, selman2006hill-HC}.  
    GRaSP is consistent, ensuring that it finds the correct permutation that minimizes the number of edges in $\Gpi$.
    In contrast, hill-climbing methods do not provide consistency guarantees but are shown to be efficient in practice.

    Existing methods for computing $\Gpi$ are primarily score-based and involve the following optimization:
    \begin{equation} \label{eq:score_opt}
        \begin{aligned}
            \argmax_{\text{DAG }\G} \quad & S(\G; D, \pi) \\
            \text{s.t.} \quad & \G \text{ is compatible with } \pi,
        \end{aligned}
    \end{equation}
    where $S$ is typically a decomposable score function, summing individual scores for each variable given its parents within the graph:
    \begin{equation} 
        S(\G; D, \pi) = \sum_{i=1}^{n} S(X_i, \Pa{X_i}{\G}; D, \pi). \label{eq:decomposable_score}
    \end{equation}
    To find the parent set of each variable in $\Gpi$, the following optimization is performed for all $1 \leq i \leq n$:
    \begin{equation} \label{eq:parent_set_optimization}
        \Pa{X_{\pi(i)}}{\Gpi} = \argmax_{\Ub \subseteq \{X_{\pi(1)}, \ldots, X_{\pi(i-1)}\}} S(X_{\pi(i)}, \Ub; D, \pi).
    \end{equation}
    
    To solve \eqref{eq:parent_set_optimization}, various approaches have been proposed.
    Note that the complexity of a brute-force search over all subsets $\Ub$ is exponential, which makes it impractical.
    Instead, the state-of-the-art search methods apply the \textit{grow-shrink (GS)} algorithm \cite{andrews2023fast-GS} on the candidate sets $\Ub$ to find the parent set of each variable.
    These methods require computing the score function $S$, $O(n^2)$ times.

    It is noteworthy that the chosen score function must ensure that the solution to this optimization problem equals $\Gpi$.
    Examples of such scores include BIC \cite{schwarz1978estimating}, BDeu \cite{buntine1991theory}, and CV General \cite{huang2018generalized}.
    Table \ref{tab:complexity} compares the time complexity of our proposed method (\name) against these methods.
    The Bayesian Information Criterion (BIC) is a well-known score in the literature, particularly for LiGAMs.
    BIC's complexity for calculating the initial graph is $O(n^5)$; the update is $O(n^4d)$.
    The Bayesian Dirichlet equivalent uniform (BDeu) score applies a uniform prior over the set of Bayesian networks and uses this prior to evaluating the model's accuracy.
    While BDeu was originally designed for discrete data, it has been adapted for continuous data by dividing the real numbers into intervals and assigning a constant value for each interval.
    The time complexity of BDeu depends on the number of distinct values each variable can take (i.e., intervals).
    Its initial and update complexities are lower-bounded by $\Omega(n^3Nlog(N))$ and $\Omega(n^2dNlog(N))$, respectively.
    The Generalized Cross-Validated Likelihood (CV General) score involves splitting the dataset into training and test sets multiple times.
    The final score for a variable given its parents is the average log-likelihood evaluated on the test sets using the regression functions learned from the training data.
    The CV General method has an initial complexity of $O(\frac{n^2N^3}{k^2})$ and an update complexity of $O(\frac{ndN^3}{k^2})$.
    This method typically uses a small constant $k$ for $k$-fold cross-validation,  which significantly increases the computational complexity.
    Among the aforementioned three strategies, BIC is the fastest.
    Our proposed method, \name, attains a speed-up of $O(n^2)$ compared to BIC.

\section{\name} \label{sec: main}
    In this section, we present a novel approach for computing $\Gpi$ for a permutation $\pi$, with improved computational complexity, which can be easily integrated into existing search methods. Our proposed method proceeds with an alternative formulation for causal discovery in LiGAMs, but first, we need a definition.

    \begin{definition}[$\BX$] \label{def: bx}
        For a random vector $\Xb$, we denote by $\BX$ the set of coefficient matrices of LiGMs that generate $\Xb$, i.e.,
        \begin{equation*}
            \BX = \{B \vert \exists \Sigma \in \mathcal{D}_n\!: \quad \cov(\Xb) = (I-B)^{-1}\Sigma (I-B)^{-T} \}.
        \end{equation*}
    \end{definition}
    Note that in this definition, the causal graphs corresponding to the elements of $\BX$ can be cyclic, but we restrict our attention to a subset of $\BX$ with acyclic corresponding graphs.
    We reformulate causal discovery in LiGAMs as follows:
    \begin{equation} \label{eq: CD with B}
        \begin{aligned}
            \argmin_{B \in \BX} \quad & \|B\|_0. \\
            \text{s.t.} \quad & G(B) \text{ is a DAG}
        \end{aligned}
    \end{equation}
    It has been shown that for any solution $B$ of \eqref{eq: CD with B}, $G(B)$ belongs to $\MEC{\G^*}$.
    Furthermore, for any graph $\G \in \MEC{\G^*}$, there exists a solution $B$ to \eqref{eq: CD with B} such that $G(B) = \G$ \cite{pearl2009causality}.
    Therefore, solving \eqref{eq: CD with B} is equivalent to performing causal discovery in LiGAMs.

    ‌In the following, we establish the relationship between $\BX$ and $\Gpi$.
    
    \begin{restatable}{theorem}{main} \label{thm: main}
        Under Assumption \ref{assumption}, for any permutation $\pi \in \Pi([n])$, there exists a unique $B \in \BX$ such that $G(B)$ is compatible with $\pi$.
        Furthermore, for this $B$, $G(B)=\Gpi$.
    \end{restatable}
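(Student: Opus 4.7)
The plan is to reduce to a Cholesky-decomposition argument and then identify the non-zero pattern of the decomposition with the edges of $\Gpi$ via Gaussian conditioning.

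First, by relabeling variables I would assume without loss of generality that $\pi$ is the identity permutation; all subsequent conditions are equivariant under the joint action of $P_\pi$ on matrices and indices, so nothing is lost. Under this assumption the requirement that $G(B)$ be compatible with $\pi$ becomes exactly that $B$ be strictly lower triangular, so that $I-B$ is \emph{unit} lower triangular.

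Next I would tackle existence and uniqueness by recognizing the defining equation of $\BX$ as a Cholesky/LDL problem. Writing $C := \cov(\Xb)$, the condition $C = (I-B)^{-1} \Sigma (I-B)^{-T}$ with diagonal positive $\Sigma$ is equivalent to
\begin{equation*}
    C^{-1} = (I-B)^T \Sigma^{-1} (I-B),
\end{equation*}
which is the $L^T D L$ factorization of the positive-definite matrix $C^{-1}$ with unit lower-triangular factor $I-B$ and positive diagonal $\Sigma^{-1}$. (Positive definiteness of $C$ follows from the LiGAM definition of $\M^*$: $\Sigma^*$ is a positive diagonal matrix and $I-B^*$ is invertible.) The classical uniqueness theorem for the $LDL^T$ decomposition then yields existence and uniqueness of $(B,\Sigma)$, and hence of $B \in \BX$ compatible with $\pi$.

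The last step is to show that the graph $G(B)$ produced by this decomposition equals $\Gpi$. For this I would use the structural equation $\Xb = B\Xb + \Nb$, which upon expanding coordinate-wise gives, for each $j$,
\begin{equation*}
    X_j = \sum_{i<j} B_{ji} X_i + N_j,
\end{equation*}
where $\Nb \sim \mathcal N(0,\Sigma)$ has diagonal covariance by construction, so $N_j$ is independent of $(X_1,\dots,X_{j-1})$. Hence the vector $(B_{j1},\dots,B_{j,j-1})$ is precisely the coefficient vector of the population linear regression of $X_j$ on $X_1,\dots,X_{j-1}$. In a joint Gaussian distribution, such a regression coefficient $B_{ji}$ vanishes if and only if the partial correlation between $X_i$ and $X_j$ given the remaining regressors is zero, equivalently $\sep{X_i}{X_j}{X_{\{1,\dots,j-1\}\setminus\{i\}}}$. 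Comparing with Definition~\ref{def: Gpi}, $B_{ji}\neq 0$ precisely matches the edge condition for $X_i \to X_j$ in $\Gpi$, so $G(B)=\Gpi$.

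The main obstacle I anticipate is the clean justification of the Gaussian regression characterization in the last step: one has to argue that the $B_{ji}$ coming out of the algebraic Cholesky factorization really are the population regression coefficients, which boils down to noting that $\Nb=(I-B)\Xb$ has covariance $\Sigma$ diagonal and is therefore uncorrelated with (and, being Gaussian, independent of) the preceding variables; this uncorrelatedness is exactly the normal-equations optimality condition for linear regression. The rest is a direct bookkeeping argument using the equivalence of conditional independence and zero partial correlation in Gaussians.
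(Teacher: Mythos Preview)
Your argument is correct and follows a genuinely different, more classical route than the paper's. The paper proves Theorem~\ref{thm: main} by first characterizing $\BX=\{I-QW:QQ^T\in\mathcal D_n\}$ via the whitening matrix (Theorem~\ref{thm: Bx}), then establishing uniqueness of the compatible $Q$ through the Gram--Schmidt construction of Algorithm~\ref{algo: qwo} (Theorem~\ref{consistency}), and finally identifying the nonzero entries of $QW$ with conditional dependencies by showing, through a block-matrix-inversion computation (Lemma~\ref{longlemma}), that the relevant inner products $\langle q_{\pi(j)},w_{\pi(i)}\rangle$ coincide with entries of the marginal precision matrix $\cov(\Xb_{[j]})^{-1}$. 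You instead recognize the constraint as an $L^TDL$/Cholesky factorization of $\cov(\Xb)^{-1}$ and invoke its classical uniqueness, then read off the entries $B_{ji}$ directly as population regression coefficients and appeal to the standard Gaussian equivalence between vanishing regression coefficients and conditional independence given the remaining predecessors. Your route is shorter, avoids the whitening machinery entirely, and makes no use of the SMR part of Assumption~\ref{assumption} (only Gaussianity and positive definiteness of $\cov(\Xb)$ are needed). The paper's longer route, on the other hand, is constructive in exactly the way that yields the \name\ algorithm: the Gram--Schmidt step \emph{is} the computational content, and the block-inversion lemma is what later justifies efficient incremental updates. So both proofs are sound; yours is the cleaner existence/uniqueness argument, while the paper's doubles as the correctness proof for the algorithm it proposes.
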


    All proofs are provided in Appendix \ref{apx: proofs}.
    Theorem \ref{thm: main} implies that to compute $\Gpi$, we can directly find the unique $B \in \BX$ whose corresponding graph $G(B)$ is compatible with $\pi$.
    Next, we propose a characterization for $\BX$ using whitening transformation \cite{fukunaga1990introduction, hyvarinen2000independent, kessy2018optimal}.

    \begin{definition}[Whitening matrix $W$]
        Let $Cov(\Xb) = U S U^{T}$ be the singular value decomposition (SVD) of $Cov(\Xb)$, where $S$ is a diagonal matrix including the singular values of $Cov(\Xb)$ on its diagonal and $U$ is an orthonormal matrix (i.e., $UU^T = I$).
        The whitening matrix $W$ is defined as
        \begin{equation} \label{eq: W}
            W := U S^{-\frac{1}{2}} U^{T}.
        \end{equation}
    \end{definition}
    We note that the `W' in \name corresponds to the whitening matrix $W$.
    Whitening is a linear transformation $\Nb_I \coloneqq W \Xb$ that transforms the Gaussian random vector $\Xb$ to another Gaussian random vector $\Nb_I$, where $Cov(\Nb_I) = I$.
    Furthermore, for an arbitrary orthogonal matrix $Q$ (i.e., $QQ^{T} \in \mathcal{D}_n$), if we define $\Nb_Q \coloneqq Q \Nb_I = Q W \Xb$, then it is straightforward to show that $\Nb_Q$ is also a Gaussian random vector with mean zero and a diagonal covariance matrix.
    Therefore, $(I- QW, \Nb_Q)$ is a LiGM that generates $\Xb$ since 
    \begin{equation*}
        \Xb = (I- QW) \Xb + \Nb_Q.
    \end{equation*}
    In the following, we show that such LiGMs create all possible LiGMs that generate $\Xb$.

    \begin{restatable}[Characterizing $\BX$]{theorem}{Bx} \label{thm: Bx}
        For any $B \in \BX$, there exists a unique orthogonal matrix $Q$ such that $B = I-QW$, and vice versa.
        That is,
        \begin{equation} \label{eq: chararterization}
            \BX = \{I-QW \vert \: QQ^T \in \mathcal{D}_n \}.
        \end{equation}   
    \end{restatable}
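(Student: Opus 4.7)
The plan is to verify the two set inclusions in \eqref{eq: chararterization} by direct computation, exploiting the defining property of the whitening matrix $W$.

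First, I would record a few basic facts about $W$. Since $Cov(\Xb) = USU^{T}$ is positive definite (as $Cov(\Xb) = (I-B^*)^{-1}\Sigma^*(I-B^*)^{-T}$ with $\Sigma^*$ having positive diagonal entries), all singular values in $S$ are positive, so $S^{-1/2}$ and hence $W$ are well-defined and invertible, with $W^{-1} = US^{1/2}U^{T}$. Moreover, $W$ is symmetric ($W^T = W$), and a short SVD computation gives $W\, Cov(\Xb)\, W = I$, equivalently $Cov(\Xb) = W^{-2}$.

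For the $\supseteq$ direction, take any $Q$ with $QQ^{T} \in \mathcal{D}_n$ and set $B := I - QW$. Then $I - B = QW$, and
\begin{equation*}
    (I-B)\, Cov(\Xb)\, (I-B)^{T} \;=\; QW \cdot W^{-2} \cdot W Q^{T} \;=\; QQ^{T} \;\in\; \mathcal{D}_n,
\end{equation*}
so $B \in \BX$ with diagonal noise covariance $\Sigma = QQ^{T}$. For the $\subseteq$ direction, take any $B \in \BX$, so that $(I-B)\, Cov(\Xb)\, (I-B)^{T} = \Sigma$ for some $\Sigma \in \mathcal{D}_n$. Define $Q := (I-B)W^{-1}$. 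Then $QW = I-B$, i.e., $B = I-QW$, and
\begin{equation*}
    QQ^{T} \;=\; (I-B)W^{-1} W^{-T} (I-B)^{T} \;=\; (I-B)\, Cov(\Xb)\, (I-B)^{T} \;=\; \Sigma \;\in\; \mathcal{D}_n,
\end{equation*}
so $Q$ is orthogonal in the sense of the paper.

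Finally, uniqueness follows immediately: if $I - QW = I - Q'W$ for two candidates, then $(Q-Q')W = 0$, and invertibility of $W$ forces $Q = Q'$. There is no real obstacle here; the entire argument hinges on the single algebraic identity $W\, Cov(\Xb)\, W = I$, after which both inclusions become one-line matrix manipulations. The only care needed is to confirm that $W$ is invertible, which is why I would explicitly note the positive-definiteness of $Cov(\Xb)$ at the outset.
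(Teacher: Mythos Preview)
Your proof is correct and essentially identical to the paper's: both directions are established by the same direct computation, using the key identity $W^{-1}W^{-T} = Cov(\Xb)$ (which you write as $Cov(\Xb) = W^{-2}$ after noting $W = W^T$), and uniqueness follows from the invertibility of $W$ via $Q = (I-B)W^{-1}$. The only cosmetic difference is that you explicitly invoke positive-definiteness of $Cov(\Xb)$ up front and use the symmetric form $W^{-2}$, whereas the paper carries the SVD factors through the calculation.
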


    By combining Theorems \ref{thm: main} and \ref{thm: Bx}, to learn $\Gpi$, it is sufficient to identify the unique orthogonal matrix $Q$ such that $G(I - QW)$ is compatible with $\pi$.
    To verify this compatibility, we impose the following two constraints:

    \begin{equation} \label{eq: compatibility constraints}
        P_{\pi}QWP_{\pi}^T \text{ is upper triangular}, \quad \text{diag}(P_{\pi}QWP_{\pi}^T) = I.
    \end{equation}

    \subsection{The \name Method} \label{subsec: QWO}
        \begin{algorithm}[t!]
            \caption{The \name module for computing and updating $\Gpi$}
            \label{algo: qwo}
            \begin{algorithmic}[1] 
                \STATE \textbf{Function \name}($\pi$, $W$, [Optional] idx$_l$, [Optional] idx$_r$, [Optional] $Q$)
                \STATE \textbf{Default values for optional arguments:} idx$_l = 1$, idx$_r = n$, $Q$ = any arbitrary $n \times n$ matrix
                \STATE Denote the columns of $Q^T$ by $\{q_i\}_{i=1}^n$ and the columns of $W$ by $\{w_i\}_{i=1}^n$
                \FOR{$i$ from idx$_r$ to idx$_l$}
                    \STATE $r_i \gets w_{\pi(i)} - \sum_{j=i+1}^{n} \frac{\inner{w_{\pi(i)}}{q_{\pi(j)}}}{\|q_{\pi(j)}\|_2^2} q_{\pi(j)}$
                    \STATE $q_{\pi(i)} \gets \frac{r_i}{\inner{r_i}{w_{\pi(i)}}}$
                \ENDFOR
                \STATE $\Gpi \gets G(I - QW)$
                \STATE \textbf{Return} $\Gpi, Q$
            \end{algorithmic}
        \end{algorithm}

        In this subsection, we introduce QW-Orthogonality (\name), our proposed approach for computing $\Gpi$ in LiGAMs.
        The function \name in Algorithm \ref{algo: qwo} presents the pseudocode of our method.
        The function takes a permutation $\pi$ and a whitening matrix $W$ as inputs, with optional arguments \text{idx$l$}, \text{idx$r$}, and $Q$.
        First, we consider the case where these optional arguments are not provided, and they are initialized as follows: idx$_l = 1$, idx$_r = n$, and $Q$ to be an arbitrary $n \times n$ matrix.
        The goal of this function is to create a matrix $Q$ that is the unique solution of \eqref{eq: compatibility constraints} and subsequently to compute $\Gpi$.
        
        Denote the columns of $Q^T$ by $\{q_i\}_{i=1}^n$ and the columns of $W$ by $\{w_i\}_{i=1}^n$.
        Since $W = US^{-\frac{1}{2}}U^T$, all the eigenvalues of $W$ are positive, and thus $\{w_i\}_{i=1}^n$ are $n$ linearly independent vectors in an $n$-dimensional space.
        Furthermore, because $Q$ is an orthogonal matrix for each pair $(i, j)$, $1 \leq i \neq j \leq n$, $q_i \perp q_j$.
        The condition $P_{\pi}QWP_{\pi}^T$ is upper triangular in \eqref{eq: compatibility constraints} implies that $q_{\pi(i)}$ should be orthogonal to $w_{\pi(j)}$ if $j > i$.
        Therefore, we need to find $\{q_i\}_{i=1}^n$ such that this condition holds.
        Note that each zero in $\|I - QW\|_0$ corresponds to an orthogonality between $\{q_i\}_{i=1}^n$ and $\{w_i\}_{i=1}^n$.
        
        With this intuition, we propose our approach for constructing $Q$, which maximizes the number of aforementioned orthogonality.
        To do so, we apply the Gram-Schmidt algorithm\cite{golub2013matrix} to the vectors $\{w_i\}_{i=1}^n$ in the following order: $\pi(n)$, $\pi(n-1)$, ..., $\pi(1)$.
        In Algorithm \ref{algo: qwo}, we iteratively for each $i$, compute $r_i$, the residual of projecting~$w_{\pi(i)}$ on the span of $\{w_{\pi(i+1)}, \ldots, w_{\pi(n)}\}$.
        This is equivalent to projecting $w_{\pi(i)}$ on the span of $\{q_{\pi(i+1)}, \ldots, q_{\pi(n)}\}$, which are orthogonal to each other.
        $q_{\pi(i)}$ is set to normalized residual $r_i$, which ensures that the product of $q_{\pi(i)}$ and $w_{\pi(i)}$ is 1, and consequently, $\text{diag}(P_{\pi}QWP_{\pi}^T) = \text{diag}(QW) = I$.
        Finally, we use Theorem \ref{thm: main} to construct $\Gpi = G(I - QW)$ and return $\Gpi$ and $Q$.

        \begin{restatable}[Soundness of Algorithm \ref{algo: qwo}]{theorem}{consistency} \label{consistency}
            Under Assumption \ref{assumption} and given the correct whitening matrix $W$ as input, matrix $Q$, the output of Algorithm \ref{algo: qwo} is the unique solution to \eqref{eq: compatibility constraints}.
            Consequently, the returned graph corresponds to the true $\Gpi$ defined in Definition \ref{def: Gpi}.
        \end{restatable}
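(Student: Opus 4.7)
The plan is to verify that the output $Q$ of Algorithm~\ref{algo: qwo} (in its default call with $\text{idx}_l=1$ and $\text{idx}_r=n$) qualifies as an orthogonal matrix in the sense $QQ^T\in\mathcal{D}_n$, satisfies the two compatibility constraints in \eqref{eq: compatibility constraints}, and is the unique such matrix; the equality $G(I-QW)=\Gpi$ will then follow from Theorems~\ref{thm: main} and~\ref{thm: Bx}.

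The core step will be a backward induction on $i$ running from $n$ down to $1$, maintaining the invariant: after iteration $i$, (a) the vectors $q_{\pi(k)}$ for $k\ge i$ are pairwise orthogonal and each $q_{\pi(k)}$ lies in $\mathrm{span}(\{w_{\pi(l)}\}_{l\ge k})$, and (b) $\langle q_{\pi(i)},w_{\pi(j)}\rangle=\delta_{ij}$ for every $j\ge i$. Orthogonality in (a) will be immediate from the Gram--Schmidt definition of $r_i$. For (b), I would unroll (a) to express each $w_{\pi(j)}$ with $j\ge i$ as a linear combination of $q_{\pi(k)}$'s with $k\ge j$; then for $j>i$ every such $q_{\pi(k)}$ has $k>i$ and is therefore orthogonal to $q_{\pi(i)}$ by (a), yielding $\langle q_{\pi(i)},w_{\pi(j)}\rangle=0$, while $\langle q_{\pi(i)},w_{\pi(i)}\rangle=1$ is immediate from the normalization $q_{\pi(i)}=r_i/\langle r_i,w_{\pi(i)}\rangle$. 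Reading these identities as the $(\pi(i),\pi(j))$ entries of $QW$ gives the triangular form with unit diagonal specified by \eqref{eq: compatibility constraints} after the $P_\pi(\cdot)P_\pi^T$ reordering, and pairwise orthogonality of the rows of $Q$ yields $QQ^T\in\mathcal{D}_n$.

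For uniqueness I would not argue directly but instead appeal to the earlier structural results: by Theorem~\ref{thm: main} there is a unique $B\in\BX$ whose graph is compatible with $\pi$, and by Theorem~\ref{thm: Bx} this $B$ is in bijection with a unique orthogonal $Q'$ via $B=I-Q'W$. Since \eqref{eq: compatibility constraints} is just a reformulation of $G(I-QW)$ being compatible with $\pi$, the orthogonal matrix satisfying those constraints is unique, so it must coincide with the output of Algorithm~\ref{algo: qwo}; then $G(I-QW)=\Gpi$ follows immediately from Theorem~\ref{thm: main}.

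The main obstacle will be establishing well-definedness of the Gram--Schmidt step, i.e.\ ruling out a vanishing residual $r_i=0$ that would make the normalization undefined. This reduces to linear independence of $\{w_{\pi(l)}\}_{l\ge i}$, which follows from the invertibility of $W=US^{-1/2}U^T$ guaranteed by positive definiteness of $\mathrm{Cov}(\Xb)$ under Assumption~\ref{assumption}; combined with the inductive identity $\mathrm{span}(\{q_{\pi(j)}\}_{j>i})=\mathrm{span}(\{w_{\pi(l)}\}_{l>i})$, linear independence forces $w_{\pi(i)}$ to have a nonzero component outside this span. Once $r_i\ne 0$ is secured, a short computation gives $\langle r_i,w_{\pi(i)}\rangle=\|r_i\|_2^2>0$, because $r_i$ is orthogonal to every $q_{\pi(k)}$ it subtracts off, so the division is valid and the remaining steps are routine linear-algebra bookkeeping.
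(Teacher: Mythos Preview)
Your verification that the algorithm's output is row-orthogonal and satisfies both constraints in \eqref{eq: compatibility constraints} is correct and matches the paper's argument (the paper is terser, but uses the same Gram--Schmidt reasoning; your explicit check that the normalization is well defined via $\langle r_i,w_{\pi(i)}\rangle=\|r_i\|_2^2>0$ is a detail the paper glosses over).

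Where you diverge is on uniqueness. The paper proves uniqueness \emph{directly} inside this theorem, by a forward induction on $i$: assuming $QQ^T\in\mathcal{D}_n$ and both constraints, $q_{\pi(i)}$ must be orthogonal both to the already-determined $q_{\pi(1)},\ldots,q_{\pi(i-1)}$ (from orthogonality of $Q$) and to $w_{\pi(i+1)},\ldots,w_{\pi(n)}$ (from triangularity); these $n-1$ vectors are linearly independent, fixing the direction of $q_{\pi(i)}$, and the diagonal constraint fixes its scale. You instead outsource uniqueness (and the identification $G(I-QW)=\Gpi$) to Theorems~\ref{thm: main} and~\ref{thm: Bx}. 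That is cleaner in spirit, but in the paper's actual logical ordering it is circular: the appendix proof of Theorem~\ref{thm: main} \emph{opens} by citing the uniqueness established here in Theorem~\ref{consistency}. So if you adopt the paper's dependency structure, your uniqueness step (and your final appeal to Theorem~\ref{thm: main}) begs the question. The fix is easy---either supply the short direct induction the paper uses, or give an independent proof of Theorem~\ref{thm: main} (e.g., via uniqueness of the LDL factorization of the permuted covariance)---but as written the proposal has this dependency loop.
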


        The optional arguments \text{idx$_l$}, \text{idx$_r$}, and $Q$ allow for incremental efficient updates to $\pi$.
        \begin{restatable}{lemma}{update} \label{lemma:blockunchanged}
            If the block between the \text{idx$_l$}-th and \text{idx$_r$}-th positions of $\pi$ is modified, the vectors $q_{\pi(k)}$ for $k < \text{idx$_l$}$ or $k > \text{idx$_r$}$ remain unchanged.
        \end{restatable}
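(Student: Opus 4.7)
The plan is to prove the invariance of the vectors $q_{\pi(k)}$ for positions outside the modified block by showing they depend only on data that is unchanged under the block modification. First I would pin down what ``modifying the block'' means: the entries of $\pi$ at positions $\text{idx}_l,\ldots,\text{idx}_r$ are rearranged among themselves, so the set $\{\pi(\text{idx}_l),\ldots,\pi(\text{idx}_r)\}$ is preserved. Consequently, for every $k<\text{idx}_l$ or $k>\text{idx}_r$, the value $\pi(k)$ is unchanged, and the unordered set $\{\pi(j): j>k\}$ coincides before and after the modification.

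For $k>\text{idx}_r$ the argument is a straightforward downward induction from $n$ down to $\text{idx}_r+1$: every $\pi(j)$ with $j\ge k$ lies outside the block and is therefore untouched, so by the inductive hypothesis each $q_{\pi(j)}$ with $j>k$ is unchanged, and then the explicit formulas in lines 5--6 of Algorithm \ref{algo: qwo} produce the same $r_k$ and hence the same $q_{\pi(k)}$. For $k<\text{idx}_l$ this direct approach fails because the individual $q_{\pi(j)}$ for $j$ inside the block genuinely differ between the two runs. Instead I would invoke the following span-invariance property, proved by its own downward induction on $i$ using the Gram-Schmidt step in lines 5--6: in any run of the algorithm, $\mathrm{span}\{q_{\pi(j)}\}_{j>i}=\mathrm{span}\{w_{\pi(j)}\}_{j>i}$ for every $i$. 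Since the right-hand side depends only on the unordered set $\{\pi(j): j>k\}$, which is invariant for $k<\text{idx}_l$, the two runs share the same subspace at step $k$. Hence the projection of $w_{\pi(k)}$ onto it, and therefore the residual $r_k$, agrees in both runs; normalizing by $\langle r_k, w_{\pi(k)}\rangle$ then yields the same $q_{\pi(k)}$.

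The main obstacle I anticipate is justifying the span-invariance lemma cleanly, since the algorithm stores only individual vectors rather than subspaces. The induction must carry two facts at each step: that $r_i\ne 0$ so the normalization is well-defined, and that $q_{\pi(i)}$ lies in $\mathrm{span}\{w_{\pi(j)}\}_{j\ge i}$ but not in $\mathrm{span}\{w_{\pi(j)}\}_{j>i}$, thereby enlarging the span by exactly one dimension. Nonvanishing of $r_i$ reduces to linear independence of the columns of $W=US^{-1/2}U^T$, which follows from $\cov(\Xb)$ being positive definite. Once this lemma is in place, the two cases of the statement reduce to the observations in the previous paragraph, completing the proof.
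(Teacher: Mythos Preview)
Your proposal is correct and follows essentially the same approach as the paper: the case $k>\text{idx}_r$ is handled by direct downward induction since all inputs to lines 5--6 are literally unchanged, and the case $k<\text{idx}_l$ is handled via the span-invariance identity $\mathrm{span}\{q_{\pi(j)}\}_{j>i}=\mathrm{span}\{w_{\pi(j)}\}_{j>i}$ together with the fact that the unordered set $\{\pi(j):j>k\}$ is preserved. Your write-up is in fact more careful than the paper's, which asserts the span equality ``due to the construction'' without spelling out the nonvanishing of $r_i$ or the dimension-count argument you outline.
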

        A consequence of Lemma \ref{lemma:blockunchanged} is that $q_{\pi(k)}$ for $k < \text{idx$_l$}$ or $k > \text{idx$_r$}$ from the previous computed $Q$ can be reused and it sufficed to merely recompute the vectors within the updated block by iterating through the for loop in lines 4-7.

        \begin{restatable}[Time complexity of Algorithm \ref{algo: qwo}]{theorem}{timecomplexity} \label{theorem:timecomplexity}
            \name algorithm as implemented in Algorithm \ref{algo: qwo} has the following time complexities:
            \begin{itemize}
                \item $O(n^3)$ for initially computing $\Gpi$ without optional arguments.
                \item $O(n^2d)$ when called with optional arguments to update $\Gpi$, where $d = \text{idx$_r$} - \text{idx$_l$}$.
            \end{itemize}
        \end{restatable}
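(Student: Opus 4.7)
The plan is to bound the cost of one pass through the for loop in Algorithm \ref{algo: qwo}, sum over iterations, treat the initial and update cases separately, and then account for the cost of forming $\Gpi = G(I-QW)$ at the end.

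At iteration $i$, the algorithm builds $r_i$ by subtracting $n-i$ scaled projections from $w_{\pi(i)}$. Each term in the sum requires the inner product $\langle w_{\pi(i)}, q_{\pi(j)} \rangle$, access to the squared norm $\|q_{\pi(j)}\|_2^2$, a scalar-vector multiplication, and a vector subtraction, all of cost $O(n)$. Forming $r_i$ is thus $O(n(n-i))$, and the normalization $q_{\pi(i)} \gets r_i/\langle r_i, w_{\pi(i)}\rangle$ adds only $O(n)$. The squared norms can be maintained cheaply: in the initial call they are computed alongside each new $q$, and in the update call the norms outside the modified block are unchanged by Lemma \ref{lemma:blockunchanged}, so only $O(d)$ fresh norms of cost $O(n)$ each are needed.

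For the initial call with $\text{idx}_l=1$ and $\text{idx}_r=n$ the loop runs $n$ times and $\sum_{i=1}^{n} O(n(n-i)) = O(n^3)$. Forming $I-QW$ explicitly is an $O(n^3)$ matrix product, and extracting its support to read off $\Gpi$ is $O(n^2)$, so the overall initial complexity is $O(n^3)$.

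For the update call the loop has $d+1 = O(d)$ iterations of cost at most $O(n\cdot n)=O(n^2)$, contributing $O(n^2 d)$ in total. The main subtlety is the final step $\Gpi \gets G(I-QW)$: by Lemma \ref{lemma:blockunchanged}, only the vectors $q_{\pi(k)}$ with $k$ inside the block change, so only the rows of $Q$ indexed by $\pi(\text{idx}_l),\ldots,\pi(\text{idx}_r)$, and hence only the corresponding $d+1$ rows of $QW$, are altered. Each such row is the product $q_{\pi(k)}^T W$ at cost $O(n^2)$, so patching the previously stored graph in place uses $O(n^2 d)$ time. The main obstacle is to argue this in-place update carefully: one must show that the entries of $I-QW$ outside the affected rows are genuinely unchanged and therefore do not need to be recomputed, which is precisely where Lemma \ref{lemma:blockunchanged} is invoked. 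Combining the loop cost and the row-patching cost yields the claimed $O(n^2 d)$ update complexity.
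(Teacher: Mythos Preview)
Your proposal is correct and follows the same decomposition as the paper: bound each Gram--Schmidt iteration by $O(n(n-i))$, sum over the loop, then account for forming $\Gpi$ afterwards. The paper's own proof is in fact terser---it simply asserts that the steps outside the for loop, including computing $G(I-QW)$, cost $O(n^2)$, and then bounds the loop by $O((\text{idx}_r-\text{idx}_l)n^2)$.

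Where you go beyond the paper is in the treatment of the final step in the update case. The paper's blanket claim that computing $G(I-QW)$ is $O(n^2)$ is not obviously true if one recomputes the full matrix product $QW$; your argument that only the $d+1$ rows of $Q$ indexed by $\pi(\text{idx}_l),\ldots,\pi(\text{idx}_r)$ change (by Lemma~\ref{lemma:blockunchanged}), and hence only those rows of $QW$ need to be recomputed at $O(n^2)$ each, is the right way to justify the $O(n^2 d)$ bound for this step. So the two proofs are the same in spirit, but yours is more careful on exactly the point the paper glosses over.
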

    
    \subsection{Integrating \name in Existing Search Methods} \label{subsec: integrating}
        
        \begin{algorithm}[t!]
            \caption{Integrating \name into a simple search method for causal discovery}
            \label{algo: integrating qwo}
            \begin{algorithmic}[1]
                \STATE \textbf{Input:} Data matrix $D$, Search method $\mathcal{F}$
                \STATE Estimate $\cov(\Xb)$ using data matrix $D$
                \STATE $U, S \gets $ Compute the SVD decomposition of $\cov(\Xb)$, such that $\cov(\Xb) = USU^T$
                \STATE $W \gets U S^{-\frac{1}{2}} U^{T}$ \hfill \% Whitening matrix
                \STATE $\pi \gets $ An initial permutation
                \STATE $\G^{\pi}, Q \gets $ \textbf{QWO}($\pi, W$)
                \WHILE{$\mathcal{F}$ has not stopped}
                    \STATE $\pi' \gets$ Update $\pi$ according to $\mathcal{F}$, and let idx$_l$ and idx$_r$ be the leftmost and rightmost index of $\pi$ that have been updated, respectively
                    \STATE $\G^{\pi'}, Q' \gets $ \textbf{QWO}($\pi$, $W$, idx$_l$, idx$_r$, $Q$)
                    \IF{$\G^{\pi'}$ has less number of edges than $\G^{\pi}$}
                        \STATE $\pi, Q, \G^{\pi} \gets \pi', Q', \G^{\pi'}$
                    \ENDIF
                \ENDWHILE
                \STATE \textbf{Return} $\Gpi$
            \end{algorithmic}
        \end{algorithm}

        Algorithm \ref{algo: integrating qwo} demonstrates how \name can be integrated into existing search methods such as GRaSP \cite{grasp} and Hill-Climbing \cite{tsamardinos2006max-maxminHC, selman2006hill-HC}.
        This algorithm integrates \name into a simple permutation-based search method for causal discovery that iteratively updates the permutation to minimize the number of edges in $\Gpi$.
        Initially, the algorithm estimates the covariance matrix of $\Xb$ and applies SVD to compute the whitening matrix.
        Starting from an initial permutation, it calls function \name to compute $Q$ and $\Gpi$.
        Subsequently, the algorithm iteratively updates the permutation using the given search method $\mathcal{F}$\footnote{Search method $\mathcal{F}$ could be any existing approach such as GRaSP \cite{grasp} or Hill-Climbing \cite{tsamardinos2006max-maxminHC, selman2006hill-HC}.}.
        For each updated permutation, it calls function \name with optional arguments to compute the new $Q$ and $\Gpi$ graph.
        Next, the algorithm checks if the new permutation is better than the previous one by checking whether the new graph has fewer edges.
        If the new permutation is better, it updates the permutation and proceeds to the next iteration. The search algorithm stops when its stopping criterion is satisfied and returns the final $\pi$ and $\Gpi$.

\section{Experiments} \label{sec: experiment}
    In this section, we present a comprehensive evaluation of \name, which is designed for the module that computes $\Gpi$ in permutation-based causal discovery methods in LiGAMs.
    As discussed earlier, a permutation-based method consists of two components: a search method and a module for computing $\Gpi$.
    Our comparison focuses on the module for computing $\Gpi$, where we benchmark \name against existing methods, namely BIC \cite{schwarz1978estimating}, BDeu \cite{buntine1991theory}, and CV General \cite{huang2018generalized}.
    For the search method, we utilized two specific algorithms: GRaSP \cite{grasp} and Hill-Climbing (HC) \cite{tsamardinos2006max-maxminHC, selman2006hill-HC}.
    Please refer to Appendix \ref{apx:experiments} for the implementation details of these methods and additional results.

    We generated random graphs according to an Erdos-Renyi model with an average degree of $i$ for each node, denoted by $ERi$.
    We did not impose any constraints on the maximum degree of the nodes.
    To generate the data matrix $D$ using a LiGAM $(B^*, \Sigma^*)$, we sampled the entries of $B^*$ from $[-2, -0.5] \cup [0.5, 2]$ and the noise variances uniformly from $[1, 2]$.
    Each reported number on the plots is an average of $30$ random graphs.
    
    Two metrics were used to evaluate the performance of the methods:
    \begin{itemize}[leftmargin=*]
        \item \textbf{Skeleton F1 Score (SKF1):} The F1 score between the skeleton of the predicted graph and the skeleton of the true graph, which ranges from 0 to 1, where 1 indicates perfect accuracy.
        \item \textbf{Complete PDAG SHD (PSHD):} For the true DAG $\G$ and predicted DAG $\hat{\G}$, PSHD calculates the complete PDAGs of $[\G]$ and $[\hat{\G}]$ and evaluates the number of changes (edge addition, removal, and type change) needed to obtain one PDAG from the other, normalized by the number of nodes. This metric shows the distance between the Markov equivalence classes of the predicted graph and the true graph.
    \end{itemize}
    
    \begin{table}[t!] 
        \caption{Results for small graphs using GRaSP and Hill-Climbing as search methods.}
        \centering
        \small 
        \begin{tabular}{c|c|c|ccccc}
            \toprule
            \multicolumn{3}{c|}{Graph} & CANCER & SURVEY & ASIA & SACHS & ER2 \\
            \hline
            \multicolumn{3}{c|}{Number of Nodes} & 5 & 6 & 8 & 11 & 5\\
            \multicolumn{3}{c|}{Average Degree} & 1.6 & 2 & 2 & 3.09 & 2\\
            \hline
            \multirow{6}{*}{QWO} & \multirow{3}{*}{GRaSP} & SKF1 & 1 & 1 & 0.87 & 0.88 & 0.85  \\
            & & PSHD & 0 & 0 & 0.87 & 0.63 & 0.2\\
            & & Time (s)& 0.003 & 0.004 & 0.01 & 0.04 & 0.002 \\
            \cline{2-8} 
            & \multirow{3}{*}{HC} & SKF1 & 0.88 & 0.92 & 0.94 & 0.88 & 0.97 \\
            & & PSHD & 0.6 & 0.5 & 0.625 & 0.63 & 0.4 \\
            & & Time (s)& 0.01 & 0.01 & 0.01 & 0.05 & 0.003 \\
            \hline
            \multirow{6}{*}{BIC} & \multirow{3}{*}{GRaSP} & SKF1 & 1 & 1 & 1 & 0.93 & 1 \\
            & & PSHD & 0 & 0 & 0 & 0.81 & 0  \\
            & & Time (s)& 0.06 & 0.07 & 0.08 & 0.39 & 0.02 \\
            \cline{2-8}
            & \multirow{3}{*}{HC} & SKF1 & 0.88 & 0.61 & 0.94 & 0.93 & 0.97 \\
            & & PSHD & 0.8 & 1.33 & 0.625 & 0.54 & 0.4 \\
            & & Time (s)& 0.01 & 0.05 & 0.08 & 0.83 & 0.15 \\
            \hline
            \multirow{6}{*}{BDeu} & \multirow{3}{*}{GRaSP} & SKF1 & 0.25 & 0.36 & 0.26 & 0.29 & 0.12  \\
            & & PSHD  & 1.4 & 1.5 & 1.5 & 1.72 & 1.4 \\
            & & Time (s)& 44.1 & 72.1 & 116.6 & 211.6 & 41.1  \\
            \cline{2-8}
            & \multirow{3}{*}{HC} & SKF1 & 0.5 & 0.54 & 0.4 & 0.37 & 0.6 \\
            & & PSHD & 1.2 & 1.16 & 1.374 & 1.81 & 1.2 \\
            & & Time (s)& 64.4 & 95.1 & 224.4 & 496.6 & 78.2 \\
            \hline
            \multirow{6}{*}{CV General} & \multirow{3}{*}{GRaSP} & SKF1 & 1 & 0.57 & 0.76 & 0.87 & 0.85 \\ 
            & & PSHD & 0 & 1.66 & 1.12 & 0.90 & 0.2\\
            & & Time (s)& 109.6 & 296.3 & 641.1 & 867.5 & 101.6 \\
            \cline{2-8} 
            & \multirow{3}{*}{HC} & SKF1 & 1 & 0.5 & 0.88 & 0.87 & 0.9 \\
            & & PSHD & 0 & 1.33 & 1.12 & 0.72 & 0.6 \\
            & & Time (s)& 176.7 & 358.6 & 1169.4 & 1325.7 & 213.3 \\
            \bottomrule
        \end{tabular}
        \label{tab:low_dimention}
    \end{table}

    We carried out our experiments in four settings:
    \begin{itemize}[leftmargin=*]
        \item \textbf{Low-Dimensional Data:} We evaluated the performance of \name and other methods on small real-world graphs, namely ASIA \cite{graph-asia}, CANCER \cite{graph-cancer}, SACHS \cite{graph-sachs}, and SURVEY \cite{graph-survey}, as well as $ER2$ graphs with 5 nodes. The number of data samples for this part is set to 500. The results of different methods using both search strategies are presented in Table \ref{tab:low_dimention}. As shown in the table, \name demonstrates comparable accuracy to the other approaches for different graph structures while being significantly faster.
        
        \item \textbf{High-Dimensional Data:} In this setting, we assessed the performance of \name on larger graphs with 10,000 data samples. Due to the exceedingly long runtime of BDeu and CV General, which are orders of magnitude slower in this setting, they were not included in this experiment. Instead, we compared \name with the BIC score using both GRaSP and Hill-Climbing search methods. As illustrated in Figure \ref{fig: gaussian}, while maintaining high accuracy, \name demonstrates a significant speedup over BIC for both search methods.

        \item \textbf{Non-Gaussian Noise Experiments:} To test the robustness of our method, we conducted experiments where the main assumption of Gaussian noise in the data-generating process was violated. We evaluated \name on linear models with non-Gaussian noise distributions (specifically exponential and Gumbel distributions).
        The results of these experiments appear in Appendix \ref{apx:experiments}. Although \name is designed for linear models with Gaussian noise, these experiments show that \name achieves almost similar accuracy to LiGAMs on models with exponential and Gumbel noise distributions.

        \item \textbf{Oracle Inverse Covariance Experiments:} In practice, errors in learning the graph may arise from two sources: the error in calculating the inverse of the covariance matrix and the error in the optimization problem.
        To study the effect of each error separately, we designed an experiment to eliminate the first source of error by providing the correct inverse covariance matrix (similar to the approach in \cite{dong2023learning}).
        We then compared the performance of the algorithms under this condition.
        The results of these experiments appear in Appendix \ref{apx:experiments}.
        The plots show that the accuracy of our method is now significantly high, indicating that the main source of error lies in the initial step of estimating the covariance matrix.
    \end{itemize}

    \begin{figure}[t!]
        \centering
        \includegraphics[width=\textwidth]{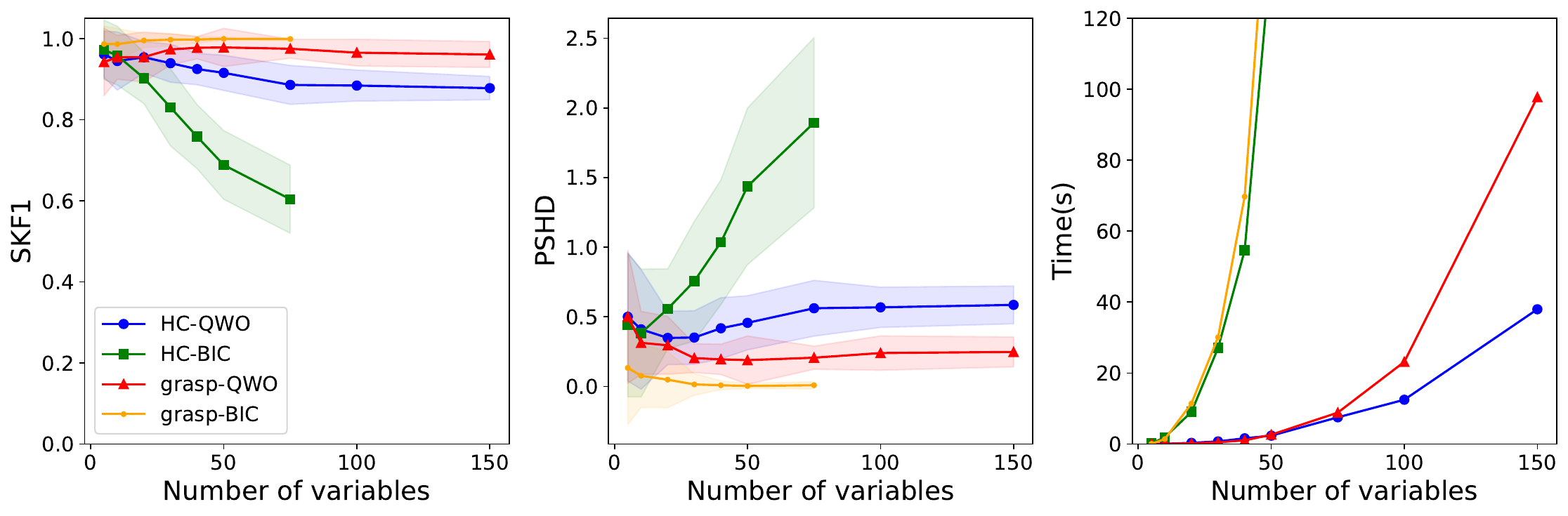}
        \caption*{Results on $ER2$ graphs.}
        \includegraphics[width=\textwidth]{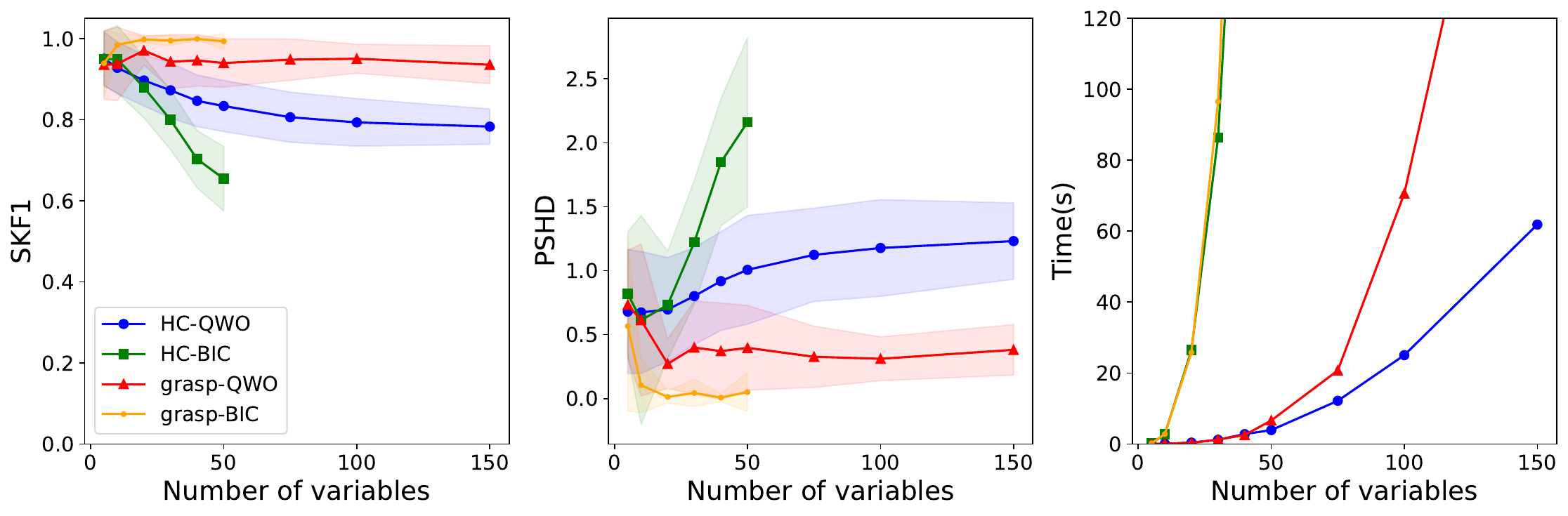}
        \caption*{Results on $ER3$ graphs.}
        \includegraphics[width=\textwidth]{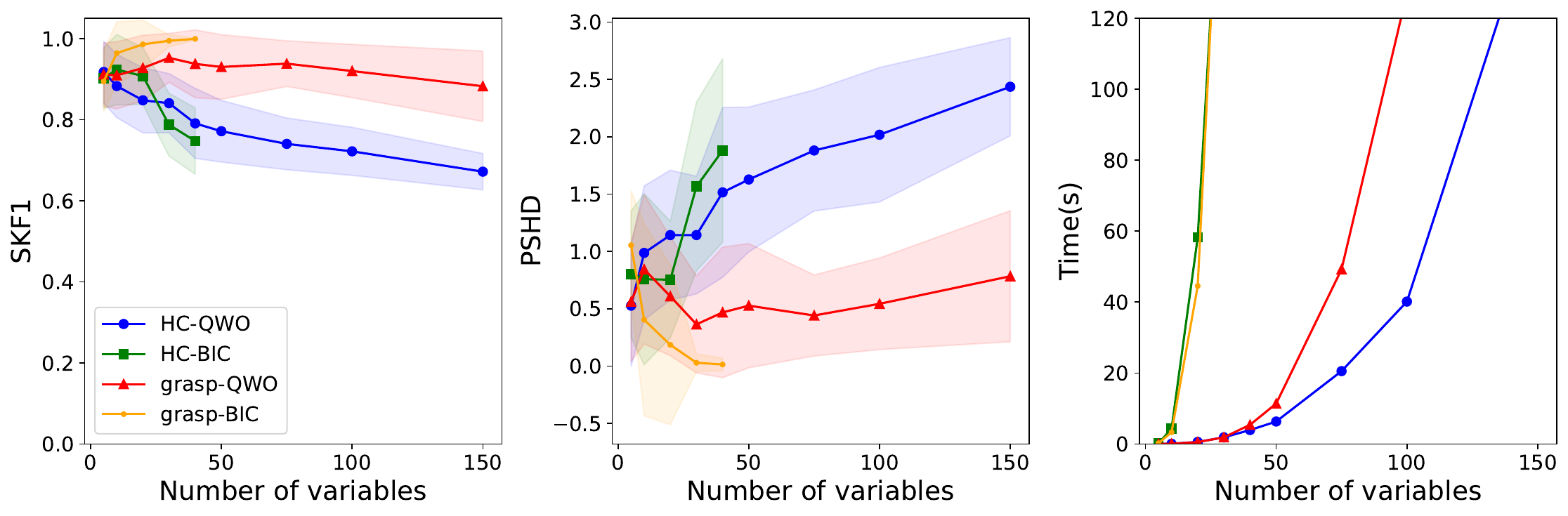}
        \caption*{Results on $ER4$ graphs.} 
        \caption{Results of performing \name and BIC on both search methods GRaSP and HC on data generated by LiGAMs for different Erdos-Renyi graphs ($ER2$, $ER3$, $ER4$).}
        \label{fig: gaussian}
    \end{figure}

\section{Conclusion, Limitations, and Future Work}
    In this paper, we proposed \name, a module to accelerate permutation-based causal discovery in LiGAMs.
    Our method reduces the computational complexity of constructing and updating the graph $\Gpi$ for a given permutation by $O(n^2)$, resulting in a significant speed-up compared to the state-of-the-art BIC-based method, as demonstrated both theoretically and through extensive experiments.
    Furthermore, \name\ seamlessly integrates into existing search strategies, enhancing their scalability without compromising accuracy.

    While our method offers substantial improvements, it is important to acknowledge its limitations.
    The primary assumptions underpinning \name\ are that the underlying causal model is a LiGAM—that is, it assumes linear relationships among variables, additive Gaussian noise, and an acyclic causal graph structure.
    These assumptions, though common in many applications, restrict the applicability of our method to scenarios where these conditions hold.
    In real-world datasets, relationships may be nonlinear, noise may not be Gaussian, or the causal graph may contain cycles due to feedback loops or reciprocal relationships among variables.
    Notably, our experiments indicate that \name\ maintains competitive performance even when the Gaussian noise assumption is violated, achieving similar accuracy on models with non-Gaussian noise distributions such as exponential and Gumbel (see Appendix \ref{apx:experiments}).
    However, the acyclicity assumption can be particularly limiting in domains where feedback mechanisms are inherent, such as economics, biology, and control systems, where causal graphs are cyclic and methods designed for acyclic graphs are not directly applicable.

    Despite this limitation, our work opens avenues for extending permutation-based causal discovery to more general settings.
    Notably, the set $\BX$ is defined over all LiGMs, encompassing both cyclic and acyclic models.
    Therefore, our characterization of $\BX$ in Theorem \ref{thm: Bx} holds for LiGMs as well.
    Combining this result with our reformulation of causal discovery in LiGAMs in \eqref{eq: CD with B}, we can generalize the formulation to LiGMs as follows:
    \begin{equation} \label{eq: CD in LiGM}
        \begin{aligned} 
            \argmin_{Q: QQ^{T} \in \mathcal{D}_n} \quad & \|I - QW\|_0.
        \end{aligned}
    \end{equation}
    This suggests that causal discovery in LiGMs can be approached by finding an orthogonal matrix $Q$ that sparsifies $I - QW$.
    Some recent works have explored causal discovery in cyclic models using orthogonal transformations similar to ours \cite{ghassami2020characterizing}.
    However, solving the optimization problem in Equation \eqref{eq: CD in LiGM} is challenging due to its non-convexity and the orthogonality constraint on $Q$.
    Developing efficient algorithms to solve this problem is a promising direction for future work.

\section*{Acknowledgments}
    This research was in part supported by the Swiss National Science Foundation under NCCR Automation, grant agreement 51NF40\_180545 and Swiss SNF project 200021\_204355 /1.

\bibliography{bibliography}
\bibliographystyle{alpha}


\newpage
\appendix
\onecolumn
\begin{center} 
    \bfseries\Large Appendix
\end{center}

The appendix is organized as follows:
\begin{itemize}[leftmargin=*]
    \item In Appendix \ref{apx: search methods}, we review some of the existing search methods for permutation-based causal discovery.
    \item In Appendix \ref{apx:experiments}, we provide additional experimental results and discuss implementation details.
    \item In Appendix \ref{apx: proofs}, we present formal proofs for the claims made in the main text.
\end{itemize}

\section{Existing Search Methods} \label{apx: search methods}
    There is a vast literature on permutation-based methods that propose different search strategies in the space of permutations \cite{grasp, tsamardinos2006max-maxminHC, solus2021consistency-gspSearch, bernstein2020ordering-gspoSearch, teyssier2012ordering-osSearch}. In this section, we describe two widely used methods: GRaSP and a Hill-Climbing-based method.

    \subsection{GRaSP}
        This method was first introduced in \cite{grasp}. To present GRaSP, we need two definitions:
    
        \begin{definition}[Tuck]
            Consider any permutation $\pi \in \Pi(n)$ and any $i, j \in [n]$, where $i$ precedes $j$ in $\pi$. Write $\pi$ as $\langle \delta_1, i, \delta_2, j, \delta_3 \rangle$, where each $\delta_i$ is a subsequence of $\pi$. Let $\gamma$ and $\gamma^c$ be the subsequences $\langle t \in \delta_2 : t \in \Anc{j}{\G} \rangle$ and $\langle t \notin \delta_2 : t \in \Anc{j}{\G} \rangle$, respectively. Then define:
            \[
            tuck(\pi, i, j) = 
            \begin{cases}
                \langle \delta_1, \gamma, j, i, \gamma^c, \delta_3 \rangle & \text{if } i \in \Anc{j}{\G_\pi} \\
                \pi & \text{otherwise}
            \end{cases}
            \]
        \end{definition}
        
        \begin{definition}[Covered Edge]
            In a DAG $\G$, a directed edge $i \rightarrow j$ between two nodes is called \textit{covered} if $\Pa{i}{\G} = \Pa{j}{\G} \backslash \{i\}$.
        \end{definition}

        \cite{grasp} proved that starting from any arbitrary permutation $\pi$, there always exists a sequence of permutations $\langle \pi, \pi_1, \pi_2, \cdots, \pi_n \rangle$ such that (i) $\G^{\pi_n} \in \MEC{\G}$, (ii) for each $i$, we can reach $\pi_{i+1}$ from $\pi_i$ by a tuck operation, and (iii) $E(\G^{\pi_{i+1}}) \subseteq E(\G^{\pi_{i}})$ (see Theorem 4.5 in \cite{grasp}). Subsequently, they propose a greedy algorithm by applying the DFS algorithm on the following graph: A graph with $n!$ vertices, one for each permutation in $\Pi(n)$, and draw a directed edge from node $\pi_1$ to $\pi_2$ if it is possible to find a covered edge $i \rightarrow j$ in $\G^{\pi_1}$ and $\pi_2$ is obtained by performing a tuck on the pair $(i,j)$ in $\pi_1$. Finally, they show that under the faithfulness assumption, GRaSP is sound.
        
    \subsection{Hill-Climbing}
        There are various search methods based on the Hill-Climbing idea, which involve searching over the space of permutations through local changes \cite{tsamardinos2006max-maxminHC, selman2006hill-HC}. A common way of updating the permutation among these methods is as follows: At each step on a permutation $\pi$, among all pairs $(i, j)$ with $|i - j| \leq k$ (where $k$ is a constant parameter), one pair is chosen randomly. The score is then calculated for the new permutation $\pi_{\text{new}}$ obtained by swapping $\pi(i)$ and $\pi(j)$. If the number of edges in $\G^{\pi_{new}}$ is less than $\G^\pi$, the process continues from $\pi_{\text{new}}$. The process stops when no better permutation is found.

\section{Additional Experiments} \label{apx:experiments}
    In this section, we provide implementation details and additional experimental results for the oracle inverse covariance and non-Gaussian noise settings.

    \subsection{Implementation Details}
        We used the implementations provided in the causal-learn library \cite{zheng2024causal} for BIC, BDeu, and CV General methods.
        There are three different versions of GRaSP introduced in \cite{grasp}. For our experiments, we used GRaSP$_0$.
        The depth of the DFS algorithm for GRaSP was set to $3$, and the value of $k$ (the maximum distance of swapped indices) in the HC algorithm was set to $5$.
        
        For the initial permutation of the search methods, we considered the initial permutation based on the size of Markov boundaries\footnote{For the definition of Markov boundary, see \cite{pearl2009causality}.} of the variables, which in the case of linear Gaussian models are equal to the number of non-zero elements in each row of the inverse covariance matrix \cite{koller2009probabilistic}.

        To generate the data matrix $D$ using a linear model $(B^*, \Sigma^*)$, we sampled the entries of $B^*$ uniformly from $[-2, -0.5] \cup [0.5, 2]$ and the noise variances uniformly from $[1, 2]$ for all of the noise distributions i.e., Gaussian, Exponential, and Gumbel.

    \subsection{Oracle Inverse Covariance}
        \begin{figure}[t!]
            \centering
            \includegraphics[width=\textwidth]{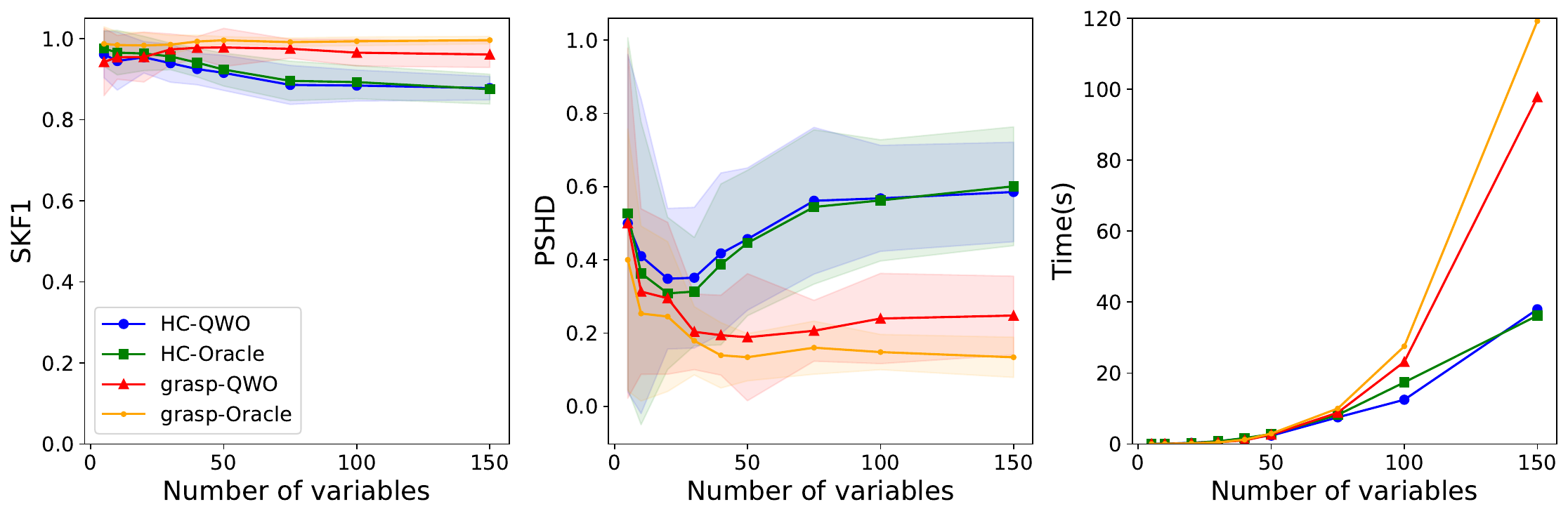}
            \caption*{Results on $ER2$ graphs.}
            \includegraphics[width=\textwidth]{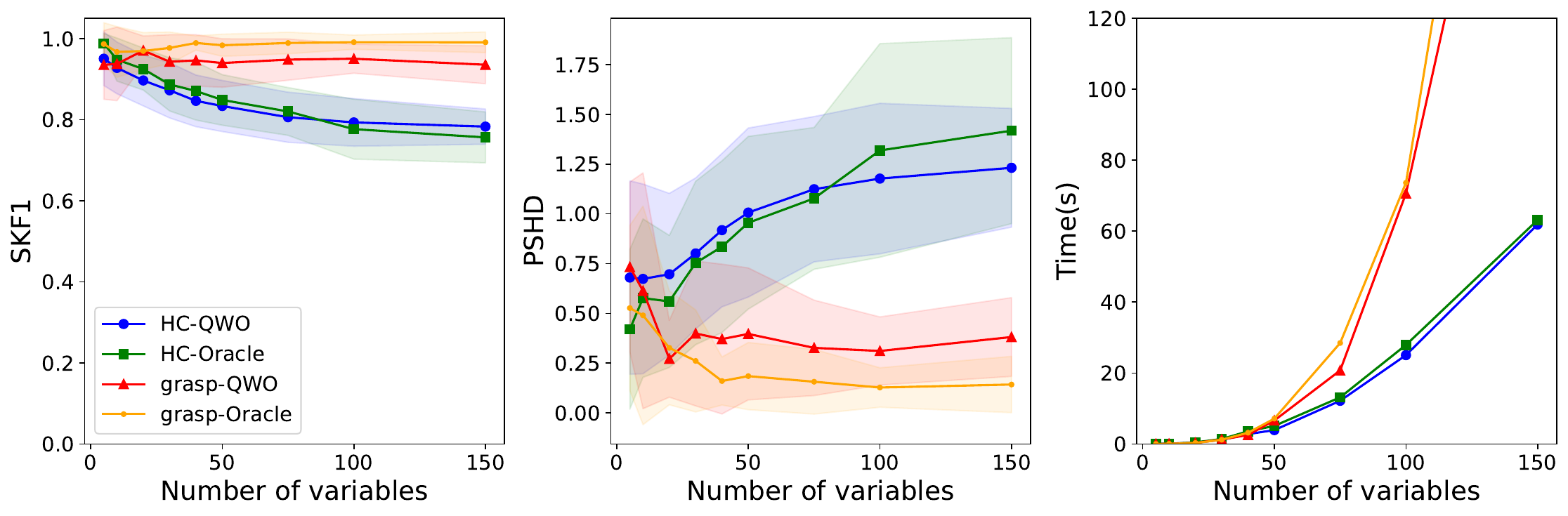}
            \caption*{Results on $ER3$ graphs.}
            \includegraphics[width=\textwidth]{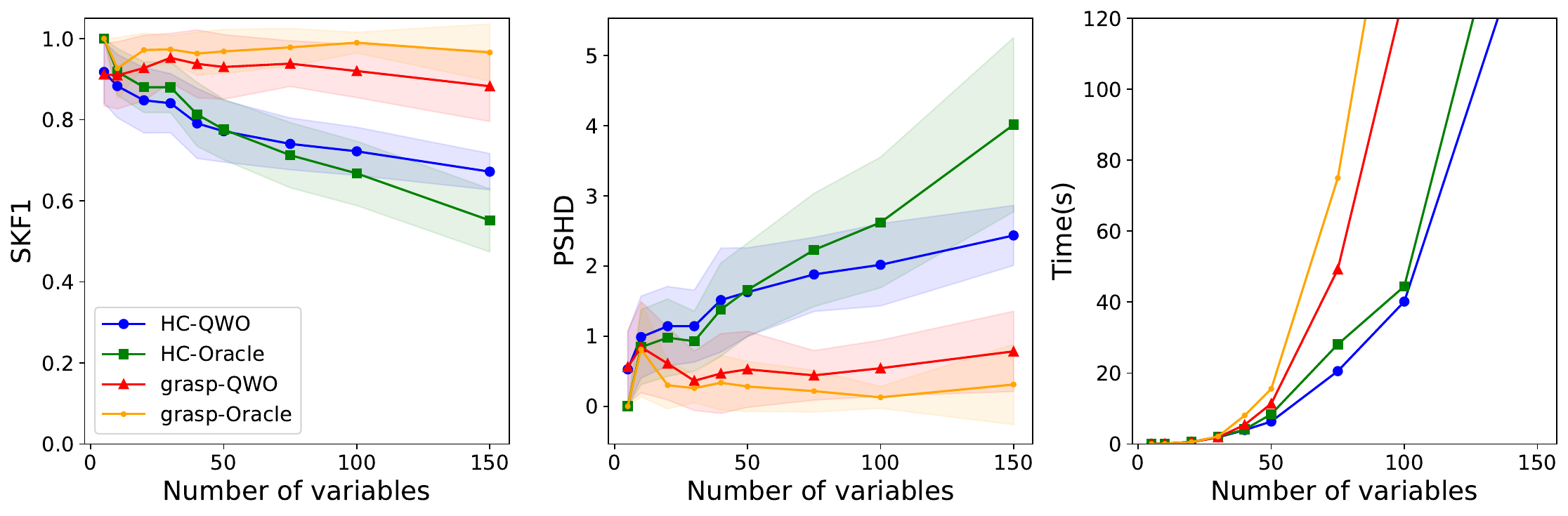}
            \caption*{Results on $ER4$ graphs.} 
            \caption{Results for our original method compared to having the true inverse covariance matrix.} 
            \label{fig: oracle}
        \end{figure}

        Figure \ref{fig: oracle} shows the comparison between our original method, which calculates the inverse covariance from data, and using the oracle inverse covariance matrix. The plots demonstrate that the accuracy of our method is significantly high when using the oracle inverse covariance, indicating that the primary source of error resides in the initial step of estimating the covariance matrix.

    \subsection{Non-Gaussian Noise}
        Herein, we present the results of running \name and the BIC on two linear \textbf{non-Gaussian} models: Exponential and Gumbel noise.
        Figures \ref{fig: exp} and \ref{fig: gumbel} show the time complexity and accuracy in terms of two metrics for both methods on both search strategies. 
        Although \name is designed for linear models with Gaussian noise, these experiments show that \name achieves almost similar accuracy to LiGAMs on models with exponential and Gumbel noise distributions.

        \begin{figure}[t!]
            \centering
            \includegraphics[width=\textwidth]{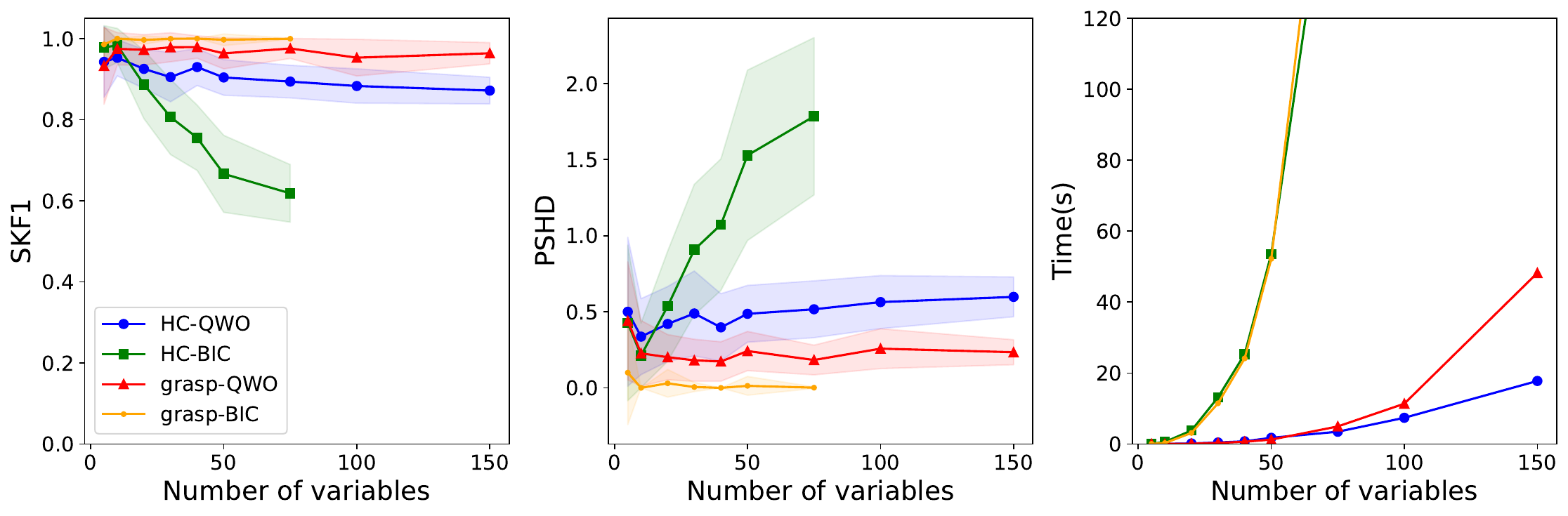}
            \caption*{Results on $ER2$ graphs.}
            \includegraphics[width=\textwidth]{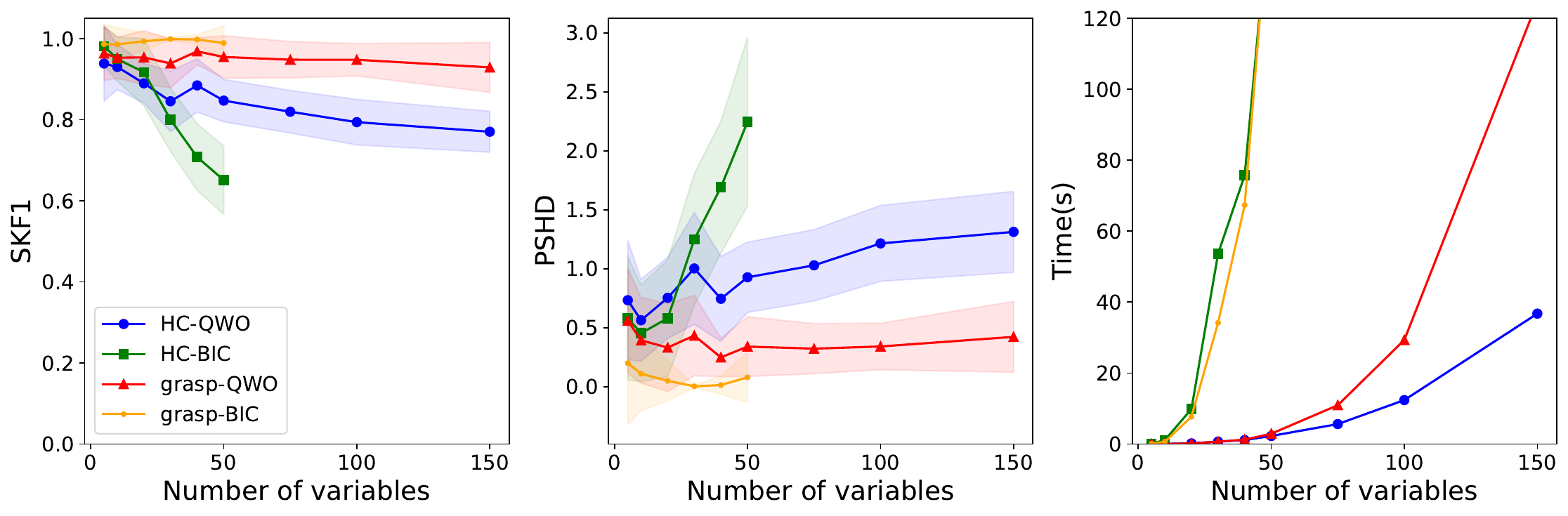}
            \caption*{Results on $ER3$ graphs.}
            \includegraphics[width=\textwidth]{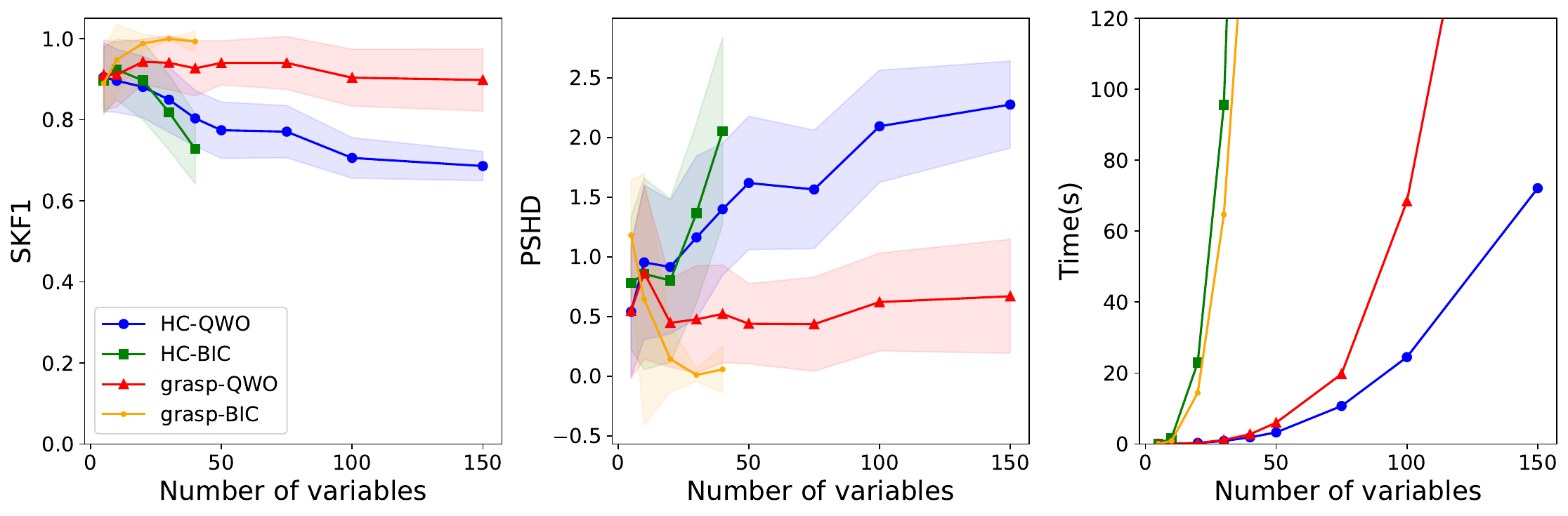}
            \caption*{Results on $ER4$ graphs.} 
            \caption{Results for our method compared to BIC on linear models with exponential noise.}
            \label{fig: exp}
        \end{figure}
    
        \begin{figure}
            \centering
            \includegraphics[width=\textwidth]{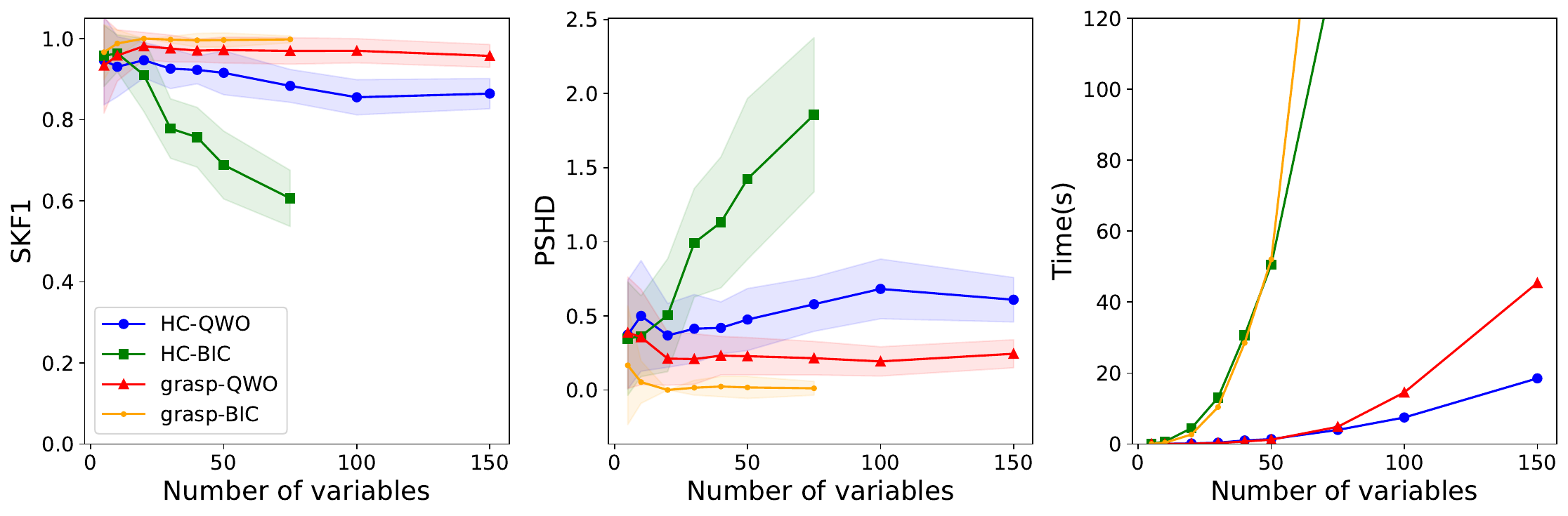}
            \caption*{Results on $ER2$ graphs.}
            \includegraphics[width=\textwidth]{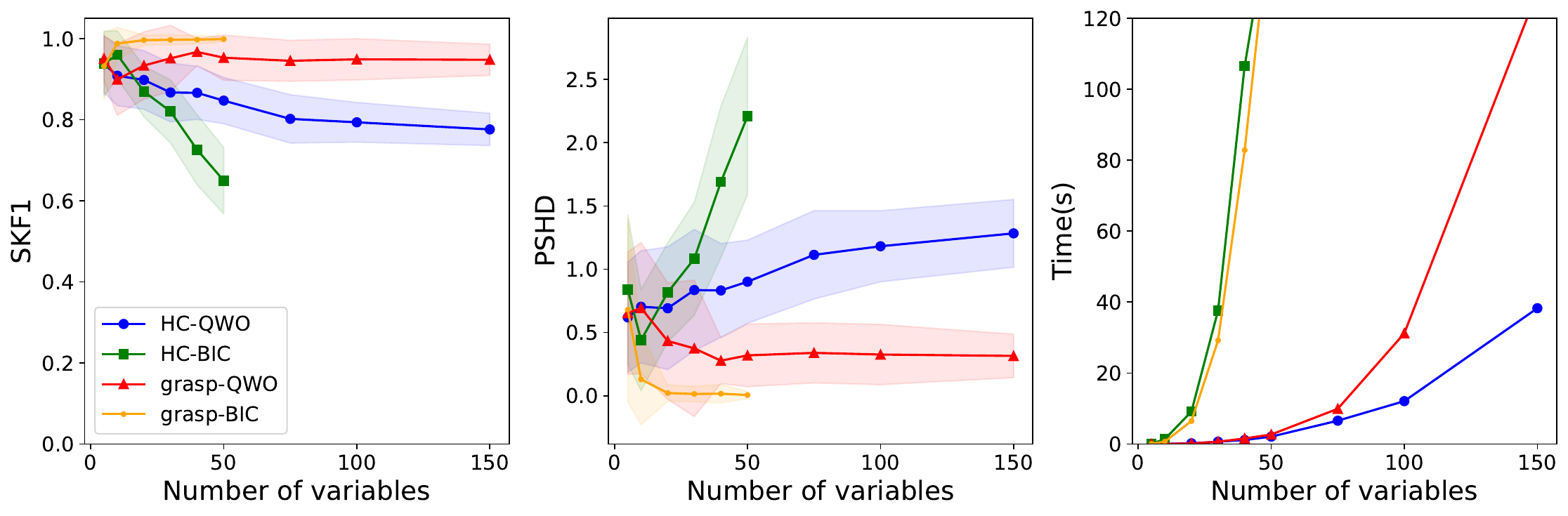}
            \caption*{Results on $ER3$ graphs.}
            \includegraphics[width=\textwidth]{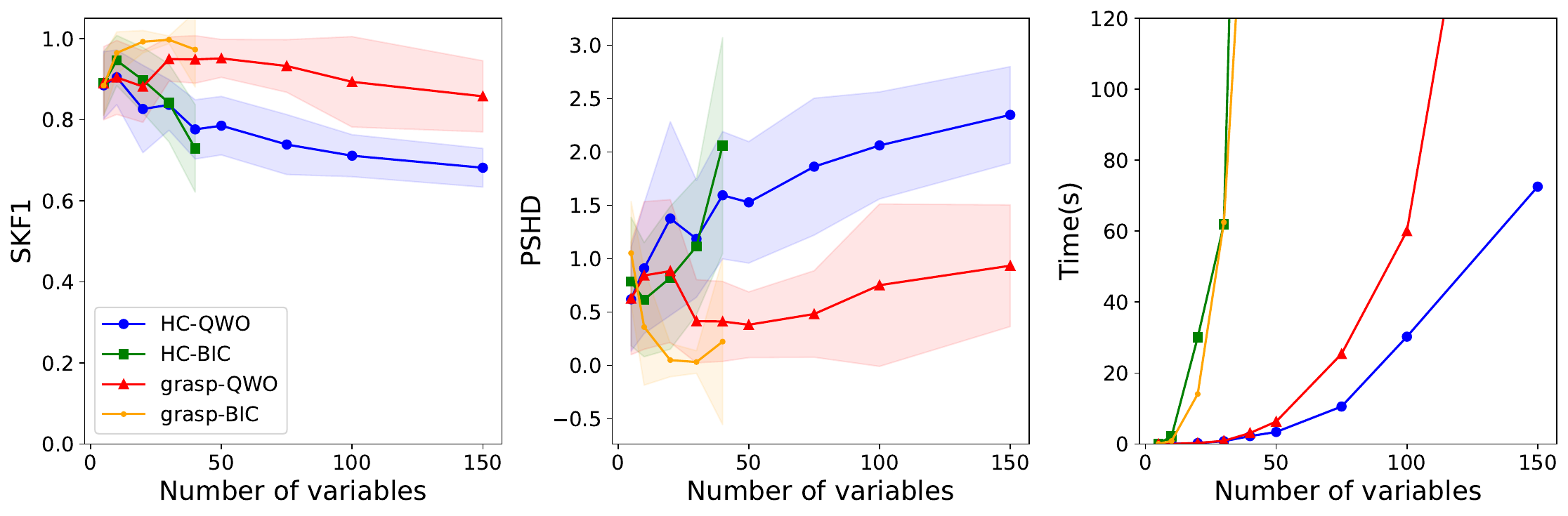}
            \caption*{Results on $ER4$ graphs.} 
            \caption{Results for our method compared to BIC on linear models with Gumbel noise.}
            \label{fig: gumbel}
        \end{figure}

\clearpage

\section{Formal Proofs} \label{apx: proofs}

\Bx*
\begin{proof}
    Recall that 
    \begin{equation*}
        \BX = \{B \vert \exists \Sigma \in \mathcal{D}_n\!: \quad \cov(\Xb) = (I-B)^{-1}\Sigma (I-B)^{-T} \},
    \end{equation*}
    $Cov(\Xb) = U S U^{T}$, and $W = U S^{-\frac{1}{2}} U^{T}$.
    Therefore, we have
    \begin{equation} \label{eq: W inv}
        W^{-1} = U S^{\frac{1}{2}} U^{T}.
    \end{equation}
    
    First, suppose $B \in \BX$, i.e., $\exists \Sigma \in \mathcal{D}_n\!: \cov(\Xb) = (I-B)^{-1}\Sigma (I-B)^{-T}$.
    We need to show that there exists an orthogonal matrix $Q$ such that $B = I-QW$, or equivalently, $QW = I-B$.
    Since $W$ is invertible, that is equivalent to showing that $A \coloneqq (I-B)W^{-1}$ is orthogonal.
    We have
    \begin{equation} \label{eq: AA^T}
        AA^T
        = \left((I-B)W^{-1}\right) \left((I-B)W^{-1}\right)^T
        = (I-B)W^{-1} W^{-T} (I-B)^T.
    \end{equation}
    Furthermore, Equation \eqref{eq: W inv} implies that 
    \begin{equation} \label{eq: W_inv W_inv^T}
        W^{-1} W^{-T} 
        = U S^{\frac{1}{2}} U^{T} U S^{\frac{1}{2}} U^{T}
        = U S U^{T}
        = \cov(\Xb) 
        = (I-B)^{-1}\Sigma (I-B)^{-T}.
    \end{equation}
    By substituting the expression for $W^{-1} W^{-T}$ from Equation \eqref{eq: W_inv W_inv^T} into Equation \eqref{eq: AA^T}, we have
    \begin{equation*}
        AA^T
        = (I-B) (I-B)^{-1}\Sigma (I-B)^{-T} (I-B)^T
        = \Sigma \in \mathcal{D}_n.
    \end{equation*}
    Hence, $A$ is orthogonal. The uniqueness of $Q$ for a given $B$ follows from the equation $Q = (I - B)W^{-1}$.

    Now suppose $B = I - QW$, where $QQ^T \in \mathcal{D}_n$.
    We need to show that $B \in \BX$.
    To this end, it suffices to show that $(I-B) \cov(\Xb) (I-B)^{T} \in \mathcal{D}_n$.
    Since $I-B = QW$ and $\cov(\Xb) = W^{-1}W^{-T}$, we have
    \begin{equation*}
        (I-B) \cov(\Xb) (I-B)^{T}
        = QW \cov(\Xb) (QW)^T
        = QW W^{-1} W^{-T} W^T Q^T
        = Q Q^T \in \mathcal{D}_n.
    \end{equation*}
    This completes the proof.

\end{proof}

\consistency*
\begin{proof}

    To establish this theorem, we first demonstrate the existence of a unique matrix $Q$ that satisfies both constraints: $\text{diag}(P_{\pi}QWP_{\pi}^T) = I$ and $P_{\pi}QWP_{\pi}^T$ is upper triangular. Subsequently, we prove that the output of Algorithm \ref{algo: qwo} is equal to this matrix.

    Suppose that $Q$ satisfies both conditions. We proceed by induction on $i$ to show that the $i$-th row of the matrix $P_\pi Q$, denoted by $q_{\pi(i)}^T$, is unique. The constraint that $P_{\pi}QWP_{\pi}^T$ is upper triangular implies that $\forall i < j: q_{\pi(i)} \perp w_{\pi(j)}$.

    Induction base, i.e., when $i = 1$:
    The vector $q_{\pi(1)}$ is orthogonal to all of the vectors in $\{w_{\pi(2)}, w_{\pi(3)}, \ldots, w_{\pi(n)}\}$ which determines the unique direction of $q_{\pi(1)}$.
    Moreover, $\text{diag}(P_{\pi}QWP_{\pi}^T) = I$ implies $\inner{q_{\pi(1)}}{w_{\pi(1)}} = 1$.
    These two conditions show that $q_{\pi(1)}$ is unique.

    Induction step:
    suppose $\{q_{\pi(1)}, q_{\pi(2)}, \ldots, q_{\pi(i-1)}\}$ are unique and we want to show that $q_{\pi_(i)}$ is unique.
    Note that $q_{\pi_(i)}$ is orthogonal to vectors in both sets $\{q_{\pi(1)}, q_{\pi(2)}, \ldots, q_{\pi(i-1)}\}$ and $\{w_{\pi(i+1)}, w_{\pi(i+2)}, \ldots, w_{\pi(n)}\}$, and vectors in the second set are orthogonal to the first set and also each other.
    This shows that all of the vectors in both sets are linearly independent, and the direction of $q_{\pi(i)}$ is unique.
    Therefore, $\text{diag}(P_{\pi}QWP_{\pi}^T) = I$ implies the uniqueness of $q_{\pi(i)}$. 

    Now, to prove the theorem, it suffices to show that the matrix $Q$, the output of Algorithm \ref{algo: qwo} satisfies $\text{diag}(P_{\pi}QWP_{\pi}^T) = I$ and $P_{\pi}QWP_{\pi}^T$ is upper-triangular. For the first one, note that in line $6$ of the algorithm, vectors $q_{\pi(i)}$ are normalized such that $\inner{q_{\pi(i)}}{w_{\pi(i)}} = 1$ which implies $\text{diag}(P_{\pi}QWP_{\pi}^T) = I$. For the second condition, note that for each $1 \leq i \leq n$, vector $q_{\pi(i)}$ is the normalized residual of the projection of $w_{\pi(i)}$ into the space of $\{w_{\pi(i+1)}, \ldots, w_{\pi(n)}\}$, which shows the orthogonality of $q_{\pi(i)}$ to $w_{\pi(j)}$ with $j > i$. This property implies $P_{\pi}QWP_{\pi}^T$ is upper-triangular. 
\end{proof}

\main* 
\begin{proof}

    In Theorem \ref{consistency}, we showed that \eqref{eq: compatibility constraints} has a unique solution and then proved that the output of Algorithm \ref{algo: qwo} is equal to this solution.
    To complete the proof, it suffices to prove that graph $G(P_{\pi}QWP_{\pi}^T)$ is equal to $\Gpi$, where $Q$ is the output of Algorithm \ref{algo: qwo}. To show this, we need to prove for each $i < j$,
    
    \begin{equation} \label{eq-apx: proof Gpi}
        \sep{X_{\pi(i)}}{X_{\pi(j)}}{X_{\{\pi(1), \pi(2), \ldots, \pi(j-1)\}\backslash\{\pi(i)\}}}{} \iff \inner{q_{\pi(j)}}{w_{\pi(i)}} = 0.
    \end{equation} 


    To prove \eqref{eq-apx: proof Gpi}, we first need a few notations.
    For a matrix $A$, $A[i:j]$ denotes the matrix consisting of \textbf{rows} $\{i, i+1, \ldots, j\}$ of matrix $A$.
    Similarly, $A[i]$ denotes the $i-$th \textbf{row} of $A$.
    For a vector $v$ and a set of vectors $\ub = \{u_1, u_2, \ldots, u_k\}$, $\proj{v}{\ub}$ and $\res{v}{\ub}$ denote the projection of vector $v$ on the span of $\ub$, and the residual of this projection, respectively, i.e., $\res{v}{\ub} = v - \proj{v}{\ub}$.
    For a matrix $A$, $\res{v}{A}$ denotes the projection of vector $v$ on the space spanned by the rows of $A$.


    Now, let us fix $1\leq i < j \leq n$.
    In the rest of the proof, we will prove Equation \eqref{eq-apx: proof Gpi} for $i,j$.
    Recall that $Cov(X) = USU^T$ denotes the SVD decomposition of the covariance matrix of $\Xb$.
    Define 
    \begin{equation*}
        A = U[1:j], \quad B = U[j+1:n], \quad C = W[1:j], \quad D = W[j+1:n].
    \end{equation*}
    Note that $AA^T = I_j$ and $BB^T = I_{n-j}$, where $I_k$ denotes the $k \times k$ identity matrix.
    Furthermore,
    \begin{equation} \label{eq-apx: C,D}
        C = A\ssq U^T, \quad D = B \ssq U^T.
    \end{equation}
    We further define $E$ to be the $j \times n$ matrix constructed as follows: 
    \begin{equation} \label{eq: c_res}
        E[k] = \res{C[k]^T}{D}^T, \quad \forall 1 \leq k \leq j. 
    \end{equation}
    
    To prove \eqref{eq-apx: proof Gpi}, we present the following two lemmas.

    \begin{lemma}(Block Matrix Inversion Lemma \cite{bernstein2009matrix-blockMatrix}) \label{lem: block matrix}
        Suppose $T$ is an invertible matrix, which is in the following form.
        \begin{equation*}
            T = 
            \begin{bmatrix}
                T_{11} & T_{12} \\
                T_{21} & T_{22}
            \end{bmatrix}
        \end{equation*}
        In this case, $T^{-1}$ can be computed using the Schur complement of $T_{11}$ as follows:
        \begin{equation*}
            T^{-1} = 
            \begin{bmatrix}
                M & -T_{11}^{-1}T_{12}N^{-1} \\
                -N^{-1}T_{21}T_{11}^{-1} & N^{-1}
            \end{bmatrix},
        \end{equation*}
        where $N = T_{22} - T_{21}T_{11}^{-1}T_{12}$ is the Schur complement of $T_{11}$ in $T$ and is assumed to be invertible, and $M = (T_{11} - T_{12}T_{22}^{-1}T_{21})^{-1}$.
    \end{lemma}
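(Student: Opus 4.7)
The plan is to establish this by a block LDU-type factorization of $T$. First, I would verify by direct block multiplication the identity
\begin{equation*}
T \;=\; \begin{bmatrix} I & 0 \\ T_{21}T_{11}^{-1} & I \end{bmatrix} \begin{bmatrix} T_{11} & 0 \\ 0 & N \end{bmatrix} \begin{bmatrix} I & T_{11}^{-1}T_{12} \\ 0 & I \end{bmatrix},
\end{equation*}
which uses only the invertibility of $T_{11}$ together with the definition $N = T_{22} - T_{21}T_{11}^{-1}T_{12}$. This is a routine computation that just tracks the four block entries.

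Next, I would invert each of the three factors separately. The lower- and upper-triangular factors invert trivially by negating their off-diagonal blocks; the middle block-diagonal factor inverts to $\mathrm{diag}(T_{11}^{-1}, N^{-1})$, using the hypothesis that $N$ is invertible (which, combined with the invertibility of $T_{11}$, is in fact equivalent to $T$ being invertible, serving as a useful sanity check). Reversing the order and multiplying the three inverses, a direct block expansion reads off the $(2,2)$ block of $T^{-1}$ as $N^{-1}$, the $(1,2)$ block as $-T_{11}^{-1}T_{12}N^{-1}$, and the $(2,1)$ block as $-N^{-1}T_{21}T_{11}^{-1}$, matching the claimed expressions.

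For the $(1,1)$ block $M$, the same multiplication gives the expression $M = T_{11}^{-1} + T_{11}^{-1}T_{12}N^{-1}T_{21}T_{11}^{-1}$. To recover the stated formula $M = (T_{11} - T_{12}T_{22}^{-1}T_{21})^{-1}$, I would perform the dual block LDU factorization with respect to $T_{22}$, writing
\begin{equation*}
T \;=\; \begin{bmatrix} I & T_{12}T_{22}^{-1} \\ 0 & I \end{bmatrix} \begin{bmatrix} M' & 0 \\ 0 & T_{22} \end{bmatrix} \begin{bmatrix} I & 0 \\ T_{22}^{-1}T_{21} & I \end{bmatrix}
\end{equation*}
with $M' = T_{11} - T_{12}T_{22}^{-1}T_{21}$, inverting each factor, and reading off the $(1,1)$ block of the result as $(M')^{-1}$.

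The main obstacle is not technical but bookkeeping of hypotheses: the two expressions for $M$ require different invertibility assumptions ($T_{11}$ and $N$ for the first, $T_{22}$ and $M'$ for the second), so one must either collect all four invertibility conditions as hypotheses, or verify that both expressions coincide whenever both are well-defined. The latter reconciliation amounts to a short application of the Woodbury identity, which I would include for completeness.
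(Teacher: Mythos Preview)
Your proposal is correct: the block LDU factorization is the standard route to the block inversion formula, and your handling of the $(1,1)$ block via the dual factorization (or equivalently Woodbury) is the right way to reconcile the two Schur-complement expressions. Note, however, that the paper does not supply a proof of this lemma at all---it is stated as a known result and attributed to \cite{bernstein2009matrix-blockMatrix}, so your write-up goes beyond what the paper itself provides.
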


    \begin{lemma} \label{longlemma}
        For matrix $E$ defined in \eqref{eq: c_res}, the following holds:
        \begin{equation}
            EE^T = \cov(\Xb_{[j]})^{-1}
        \end{equation}
        Recall that $\Xb_{[j]} = [X_1, \dots, X_j]^T$.    
    \end{lemma}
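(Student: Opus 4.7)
The approach is to translate the row-by-row projection definition of $E$ into a single closed-form matrix expression, then recognize the resulting expression as a Schur complement that is handled by Lemma \ref{lem: block matrix}.

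\textbf{Step 1: Closed form for $EE^T$.} Since $D$ has linearly independent rows (as $W$ is invertible, so are its sub-blocks viewed as maps onto the relevant subspaces), I would use the standard orthogonal projection formula: for each row, $\res{C[k]^T}{D} = C[k]^T - D^T(DD^T)^{-1}D\, C[k]^T$. Stacking these gives the compact form $E = C - C D^T (DD^T)^{-1} D$. Using the orthogonality between the residual and $D$, the cross-term collapses and one obtains
\begin{equation*}
    EE^T = CC^T - CD^T(DD^T)^{-1}DC^T.
\end{equation*}

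\textbf{Step 2: Evaluate the blocks via \eqref{eq-apx: C,D} and $UU^T=I$.} Substituting $C = A\ssq U^T$ and $D = B\ssq U^T$ and using $U^T U = I$ (equivalently, $AA^T = I_j$, $BB^T = I_{n-j}$, and the cross-orthogonality between $A$ and $B$), each block simplifies to an expression in $A$, $B$, and $S$:
\begin{equation*}
    CC^T = AS^{-1}A^T, \quad CD^T = AS^{-1}B^T, \quad DD^T = BS^{-1}B^T.
\end{equation*}
Therefore $EE^T = AS^{-1}A^T - AS^{-1}B^T(BS^{-1}B^T)^{-1}BS^{-1}A^T$.

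\textbf{Step 3: Identify as a Schur complement.} The key observation is that $U S^{-1} U^T$ is the inverse of $\cov(\Xb) = USU^T$, and it decomposes into blocks exactly as
\begin{equation*}
    US^{-1}U^T = \begin{bmatrix} AS^{-1}A^T & AS^{-1}B^T \\ BS^{-1}A^T & BS^{-1}B^T \end{bmatrix}.
\end{equation*}
Applying Lemma \ref{lem: block matrix} to $T' := US^{-1}U^T$ with the stated $2\times 2$ block partition, the top-left block of $(T')^{-1}$ equals $(T'_{11} - T'_{12}(T'_{22})^{-1}T'_{21})^{-1}$, i.e.\ exactly $(EE^T)^{-1}$. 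On the other hand, $(T')^{-1} = USU^T = \cov(\Xb)$, whose top-left $j\times j$ block is $ASA^T = \cov(\Xb_{[j]})$. Equating the two expressions and inverting yields $EE^T = \cov(\Xb_{[j]})^{-1}$.

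\textbf{Main obstacle.} The algebra itself is routine once the expressions line up; the only subtlety is getting the direction of the Schur-complement argument right. It is tempting to apply Lemma \ref{lem: block matrix} directly to $\cov(\Xb) = USU^T$, but that would produce the inverse of $\cov(\Xb_{[j]}) - (\cdots)$, which is not what appears in $EE^T$. The clean matching requires applying the lemma to the \emph{inverse} $US^{-1}U^T$ instead, so that its top-left block is $AS^{-1}A^T$ (matching the first term of $EE^T$) and its Schur complement produces exactly the subtracted piece. Verifying that $BS^{-1}B^T$ is invertible (needed to apply the lemma) is immediate since $B$ has orthonormal rows and $S$ is positive definite.
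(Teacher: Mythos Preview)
Your proposal is correct and follows essentially the same route as the paper: express $E$ via the orthogonal projection onto the row space of $D$, simplify $EE^T$ to $AS^{-1}A^T - AS^{-1}B^T(BS^{-1}B^T)^{-1}BS^{-1}A^T$ using $C=A\ssq U^T$, $D=B\ssq U^T$, and $U^TU=I$, then apply the block-inversion lemma to $\cov(\Xb)^{-1}=US^{-1}U^T$ so that the top-left block of its inverse is simultaneously $(EE^T)^{-1}$ and $\cov(\Xb_{[j]})$. Your remark about applying the lemma to the \emph{inverse} rather than to $\cov(\Xb)$ itself is exactly the key orientation the paper uses as well.
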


    \begin{proof}
        We will prove that $(EE^T)^{-1} = \cov(\Xb_{[j]})$.
        We define
        \begin{equation*}
            P = D^T(DD^T)^{-1}D,
        \end{equation*}
        which is the projection matrix that projects any vector to the space of rows of $D$.
        Therefore, we have
        \begin{equation} \label{eq-apx: E}
            E = C - (PC^T)^T = C(I - P^T).
        \end{equation}
        
        Next, we calculate $DD^T$, $P$, and $EE^T$  in terms of $A, B$, and $S$.
        First, \eqref{eq-apx: C,D} implies the following.
        \begin{equation} \label{eq-apx: DDT}
            DD^T = B \ssq U^T(B \ssq U^T)^T = B \ssq U^T U \ssq B^T = B S^{-1} B^T.
        \end{equation}
        Using \eqref{eq-apx: DDT}, we have
        \begin{equation} \label{eq-apx: P}
            P = D^T(DD^T)^{-1}D = U \ssq B^T (B S^{-1} B^T)^{-1} B \ssq U^T.
        \end{equation}
        Using \eqref{eq-apx: E}, we have
        \begin{equation} \label{eq-apx: EET}
            \begin{aligned}
                EE^T 
                &= C(I - P^T)(C(I - P^T))^T = C(I - P)(I - P)^TC^T = C(I-P)C^T \\
                &= CC^T - CPC^T.
            \end{aligned}
        \end{equation}
        Note that in \eqref{eq-apx: EET}, we used $(I - P)(I - P)^T = I - P$, which holds true because $P$ is a projection matrix.
        To further refine \eqref{eq-apx: EET}, we apply \eqref{eq-apx: P} to calculate $CC^T$ and $CPC^T$ in the following.
        \begin{align*}
            CC^T &= A \ssq U^T (A \ssq U^T)^T = A \ssq U^T U \ssq A^T = A S^{-1} A^T \\
            CPC^T &= A \ssq U^T \left(U \ssq B^T (B S^{-1} B^T)^{-1} B \ssq U^T \right) (A \ssq U^T)^T \\ 
            &= A
            S^{-1} B^T (B S^{-1} B^T)^{-1} B S^{-1}A^T 
        \end{align*}
        Applying the last equations in \eqref{eq-apx: EET}, we have
        \begin{equation} \label{eq-apx: EET - 2}
            EE^T = A S^{-1} A^T - A S^{-1} B^T (B S^{-1} B^T)^{-1} B S^{-1}A^T.
        \end{equation}

        Next, we present $\cov(\Xb)^{-1}$ in terms of $A, B, S$.
        \begin{equation} \label{eq-apx: cov inv}
            \cov(\Xb)^{-1}
            = US^{-1}U^T
            = 
            \begin{bmatrix}
                A \\
                B
            \end{bmatrix}
            S^{-1}
            \begin{bmatrix}
                A & B
            \end{bmatrix}
            =
            \begin{bmatrix}
                AS^{-1}A^T & AS^{-1}B^T \\
                BS^{-1}A^T & BS^{-1}B^T
            \end{bmatrix}.
        \end{equation}
        Applying Lemma \ref{lem: block matrix} to $\cov(\Xb)^{-1}$ with \eqref{eq-apx: cov inv}, we have
        \begin{equation*}
            \cov(\Xb)
            = \begin{bmatrix}
                M & -(AS^{-1}A^T)^{-1}AS^{-1}B^TN^{-1} \\
                -N^{-1}BS^{-1}A^T(AS^{-1}A^T)^{-1} & N^{-1}
            \end{bmatrix},
        \end{equation*}
        where 
        \begin{equation} \label{eq-apx: M}
            M = (AS^{-1}A^T - AS^{-1}B^T(BS^{-1}B^T)^{-1}BS^{-1}A^T)^{-1},
        \end{equation}
        and $N$ is a matrix that we do not need to calculate.
        Note that $M = \cov(\Xb_{[j]})$ since $M$ is an $j \times j$ matrix.
        Furthermore, \eqref{eq-apx: M} and \eqref{eq-apx: EET - 2} imply that $M = (EE^T)^{-1}$.
        Therefore, $(EE^T)^{-1} = \cov(\Xb_{[j]})$, which concludes the proof.
    \end{proof}

    Now to prove the theorem we use a classic result on the linear Gaussian data. Consider $\Theta$ is the inverse of the covariance matrix for some variables with joint Gaussian distribution, then two variables $X_i$ and $X_j$ are independent given all the other variables if and only if $\Theta_{i,j} = 0$ \cite{koller2009probabilistic}.
    Considering this result, to show \eqref{eq-apx: proof Gpi}, it is sufficient to show the following: 
        
    $$\inner{q_{\pi(j)}}{w_{\pi(i)}} = 0 \iff (\cov(\Xb_{[j]})^{-1})_{i, j} = 0$$

    To show the above equation, let $D^{\pi}$ denotes the set $\{w_{\pi(j+1)}, w_{\pi(j+2)}, \ldots, w_{\pi(n)}\}$, then $q_{\pi(j)} = \res{w_{\pi(j)}}{D^{\pi}}$. Thus, $q_{\pi(j)}$ is orthogonal to the span of the vectors in $D^{\pi}$, then we have 

    \begin{align*}
        \inner{w_{\pi(i)}}{q_{\pi(j)}} &= \inner{w_{\pi(i)}}{\res{w_{\pi(j)}}{D^{\pi}}} = \inner{{\res{w_{\pi(i)}}{D^{\pi}} + \proj{w_{\pi(i)}}{D^{\pi}}}}{{\res{w_{\pi(j)}}{D^{\pi}}}} \\
        & = \inner{{\res{w_{\pi(i)}}{D^{\pi}}}}{{\res{w_{\pi(j)}}{D^{\pi}}}}
    \end{align*}

    Based on lemma \ref{longlemma}, the value of $\inner{{\res{w_{\pi(i)}}{D^{\pi}}}}{{\res{w_{\pi(j)}}{D^{\pi}}}}$ is equal to $(\cov(\Xb_{[j]})^{-1})_{i, j}$ and this result completes the proof of the theorem. 

\end{proof} 

\update*
\begin{proof}

    Let $\pi'$ be the new permutation obtained after modifying the permutation $\pi$.
    First let $k > \text{idx}_r$.
    In this case, for each $j > \text{idx}_r$, $w_{\pi(j)} = w_{\pi'(j)}$, thus, $r_j$ remains the same for both $\pi$ and $\pi'$.
    Therefore, $q_{\pi(k)} = q_{\pi'(k)}$.

    Now let  $k < \text{idx}_l$.
    As we discussed in the main text, due to the construction of matrix $Q$ in Algorithm \ref{algo: qwo}, for any $1 \leq i \leq n$, the span of vectors $\{q_{\pi(i)}, q_{\pi(i+1)}, \ldots, q_{\pi(n)}\}$ is equal to the span of vectors $\{w_{\pi(i)}, w_{\pi(i+1)}, \ldots, w_{\pi(n)}\}$.
    Furthermore, since we just modified the block between the \text{idx$_l$}-th and \text{idx$_r$}-th positions of $\pi$ to obtain $\pi'$, the two sets $\{\pi(k+1), \pi(k+2), \ldots, \pi(n)\}$ and $\{\pi'(k+1), \pi'(k+2), \ldots, \pi'(n)\}$ are equal.
    Therefore, the sets $\{w_{\pi(k+1)}, w_{\pi(k+2)}, \ldots, w_{\pi(n)}\}$ and $\{w_{\pi'(k+1)}, w_{\pi'(k+2)}, \ldots, w_{\pi'(n)}\}$ are also equal.
    Hence, the residual of vector $w_{\pi(k)}$ on these two sets is equal, which implies that $q_{\pi(k)} = q_{\pi'(k)}$.
    This completes the proof.

\end{proof}

\timecomplexity*
\begin{proof}
    In Algorithm \ref{algo: qwo}, the steps outside the for loop include initialization of variables and computing $G(I - QW)$ are executed in $O(n^2)$.
    Inside the for loop, the time complexity for calculating each $r_i$ is $O(n(n-i))$ because each term in the summation is done in $O(n)$.
    As $i$ iterates from $1$ to $n$, this complexity is $O(n^2)$ and for the entire loop it takes $O((\text{idx}_r - \text{idx}_l)n^2)$.
    Therefore, the time complexity for the initial step is $O(n^3)$, and for the update steps is $O(n^2d)$.
\end{proof}

\end{document}